\theoremstyle{plain}
\newtheorem{theorem}{Theorem}[section]
\newtheorem{lemma}[theorem]{Lemma}
\theoremstyle{definition}
\newtheorem{definition}[theorem]{Definition}
\theoremstyle{remark}
\long\def\comment#1{}
\newcommand{\rom}[1]{\uppercase\expandafter{\romannumeral #1\relax}}
\newcommand{\ada}{GloGNN}
\icmltitlerunning{Submission and Formatting Instructions for ICML 2022}
\begin{document}

\twocolumn[
\icmltitle{Finding Global Homophily in Graph Neural Networks When Meeting Heterophily}



\icmlsetsymbol{equal}{*}

\begin{icmlauthorlist}
\icmlauthor{Xiang Li}{yyy}
\icmlauthor{Renyu Zhu}{yyy}
\icmlauthor{Yao Cheng}{yyy}
\icmlauthor{Caihua Shan}{comp}
\icmlauthor{Siqiang Luo}{sch}
\icmlauthor{Dongsheng Li}{comp}
\icmlauthor{Weining Qian}{yyy}
\end{icmlauthorlist}

\icmlaffiliation{yyy}{School of Data Science and Engineering, East China Normal University, Shanghai, China}
\icmlaffiliation{comp}{Microsoft Research Asia, Shanghai, China}
\icmlaffiliation{sch}{School of Computer Science and Engineering, Nanyang Technological University, Singapore}

\icmlcorrespondingauthor{Xiang Li}{xiangli@dase.ecnu.edu.cn}

\icmlkeywords{Machine Learning, ICML}

\vskip 0.3in
]



\printAffiliationsAndNotice{}  

\begin{abstract}
We investigate graph neural networks on graphs with heterophily.
Some existing methods 
amplify a node's neighborhood with multi-hop neighbors
to include more nodes with homophily.
However,
it is a significant challenge to set personalized neighborhood sizes for different nodes.
Further,
for other homophilous nodes excluded in the neighborhood,
they are ignored for information aggregation.
To address these problems,
we propose two models \ada\ and \ada++,
which generate a node's embedding by aggregating information from global nodes in the graph.
In each layer,
both models learn a coefficient matrix to capture the correlations between nodes,
based on which neighborhood aggregation is performed.
The coefficient matrix allows signed values and is derived from an optimization problem that has a closed-form solution.
We further accelerate neighborhood aggregation 
and derive a linear time complexity.
We theoretically explain the models' effectiveness by proving that both the coefficient matrix 
and the generated node embedding matrix have the desired grouping effect.
We conduct extensive experiments to 
compare our models against 11 other competitors on 15 benchmark datasets in a wide range of 
domains, scales and graph heterophilies.
Experimental results show that
our methods achieve superior performance and are also very efficient.
\end{abstract}

\section{Introduction}
\label{sec:intro}
Graph-structured data is ubiquitous 
in a variety of domains including chemistry, biology and sociology.
In graphs (networks),
nodes and edges represent entities and their relations, respectively. 
To enrich the information of graphs, 
nodes are usually associated with various features.
For example,
on Facebook,
users are connected by the \emph{friendship} relation and each user has features like \emph{age}, \emph{gender} and \emph{school}.
Both node features and graph topology provide sources of information for graph-based learning.
Recently,
graph neural networks (GNNs)~\cite{kipf2016semi,velivckovic2017graph,hamilton2017inductive}
have received significant attention for the capability to seamlessly integrate the two sources of information
and they have been shown to serve as
effective tools for representation learning on graph-structured data.

Based on the implicit graph homophily assumption,
traditional GNNs~\cite{kipf2016semi}
adopt a non-linear form of smoothing operation 
and 
generate node embeddings by aggregating information from a node's neighbors.
Specifically,
\emph{homophily} is a key characteristic in a wide range of real-world graphs, 
where linked nodes tend to share similar features or have the same label.
These graphs include friendship networks~\cite{mcpherson2001birds}, political networks~\cite{gerber2013political} and citation networks~\cite{ciotti2016homophily}. 
However, 
in the real world,
there also exist graphs with \emph{heterophily},
where nodes with dissimilar features or different labels are more likely to be connected.
For example,
different amino acid types are connected in protein structures;
predator and prey are related in the ecological food webs.
In these networks,
due to the heterophily,
the smoothing operation could generate similar representations for nodes with different labels,
which lead to the poor performance of 
GNNs. 

\begin{figure}[!htbp]
    \centering
        \includegraphics[width = 0.7\linewidth]{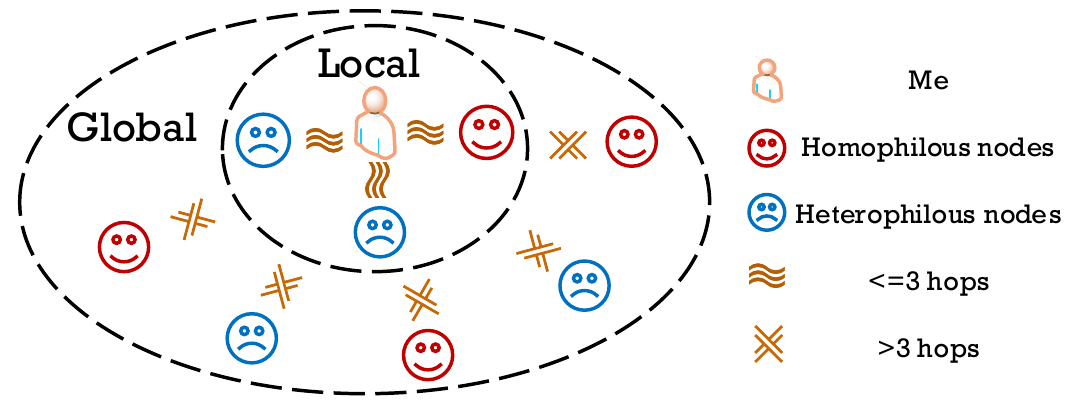}
        \caption{A toy example to show global homophily. All the homophilous nodes express the global homophily of the center user.}
        \label{fig:toy}
\end{figure}

To generalize GNNs to heterophilous graphs,
some recent works~\cite{zhu2020beyond,bo2021beyond,chien2020adaptive} have been proposed
to leverage high-pass convolutional filters and multi-hop neighbors to address the heterophily issue. 
On the one hand,
the high-pass filters can be used to push away a node's feature vector from its neighbors' while 
the low-pass filters used by traditional GNNs 
do the opposite.
The combination of low-pass and high-pass filters in these models 
enforces the learned representation of a node 
to be close to its homophilous neighbors' and distant from heterophilous ones'.
On the other hand,
in heterophilous graphs,
linked nodes are more likely to be dissimilar while distant nodes could share a certain similarity,
so we have to jump the \emph{locality} of a node 
to find its homophilous neighbors.
As shown in Figure~\ref{fig:toy},
a user has only one \emph{local} neighbor with homophily,
while three homophilous nodes exist multi-hop away.
All these four nodes 
exhibit the \emph{global homophily} for the user,
which
can be used to help predict her label.
Meanwhile,
since it has been pointed out in~\cite{zhu2020beyond} that
the 2-hop neighborhood of a node under some mild condition will be homophily-dominant in expectation,
some models
amplify a node's neighborhood with multi-hop neighbors.
However,
how large the neighborhood size should be set for different nodes is a challenge.
Further, 
for those homophilous nodes excluded in the 
neighborhood,
they will not be utilized in information aggregation.
Therefore,
we propose to leverage 
the {global} homophily for a node in the graph by adding all the nodes to its neighborhood.
As a side effect,
in this case,
the traditional neighborhood aggregation will have a quadratic time complexity,
which is practically infeasible.
Further,
more heterophilous nodes will be included in the neighborhood, 
which could adversely affect the model performance.
Therefore,
a research question arises:
\emph{
Can we find global homophily for a node
and
develop a GNN model that is both effective and efficient for heterophilous graphs?}


In this paper,
to find \textbf{Glo}bal homophily for nodes in graphs with heterophily,
we propose an effective and scalable \textbf{GNN} model, namely, {\ada}.
In the $l$-th convolutional layer,
inspired by the linear subspace model~\cite{liu2012robust},
we linearly characterize each node by all the nodes in the graph and derive a coefficient matrix $Z^{(l)}$ 
such that $Z_{ij}^{(l)}$ describes the importance of node $x_j$ to node $x_i$.
We formulate the characterization problem as an optimization problem 
that has a closed-form solution for $Z^{(l)}$.
After that,
taking $Z^{(l)}$ as the weight matrix,
we generate node embedding matrix $H^{(l)}$ by 
aggregating information from global nodes.
Note that
directly computing such $Z^{(l)}$ and $Z^{(l)}$-based neighborhood aggregation
lead to cubic and quadratic time complexities w.r.t. the number of nodes, respectively.
Hence, 
we avoid calculating $Z^{(l)}$ directly and reorder matrix multiplication in neighborhood aggregation,
which effectively reduces the time complexity to linear.
Finally,
we mathematically show that 
both $Z^{(l)}$ and the generated node embedding matrix $H^{(l)}$ 
have the \emph{grouping effect}~\cite{lu2012robust}, i.e., 
for any two nodes in a graph, no matter how distant they are,
if they share similar features and local structures,
their embedding vectors will be close to each other.
This helps explain the effectiveness of our models.
We summarize the main contributions of our paper as:

\noindent{\small$\bullet$}
We propose two effective and efficient GNN models, \ada\ and \ada++.


\noindent{\small$\bullet$}
We theoretically show that both $Z^{(l)}$ and the generated node embedding matrix $H^{(l)}$
have the {grouping effect}.

\noindent{\small$\bullet$}
We combine
low-pass and high-pass convolutional filters in neighborhood aggregation,
as $Z^{(l)}$ allows signed values.

\noindent{\small$\bullet$}
We show the superiority of our models against 11 other methods on 15 benchmark datasets of diverse domains, sizes and graph heterophilies.

\section{Related Work}
\label{sec:related}
GNNs have recently received significant interest 
for the superior performance on graph-based learning.
The early model GCN~\cite{kipf2016semi} extends
the convolution operation from regular data to irregular graph-structured data.
GCN is a spectral model~\cite{bruna2013spectral,defferrard2016convolutional}, 
which decomposes graph signals via graph Fourier transform and convolves on the spectral components.
There are also a class of spatial GNN models that directly aggregate information from spatially nearby neighbors of a node.
For example,
GraphSAGE~\cite{hamilton2017inductive} generates a node's embedding by aggregating information from a fixed number of neighbors.
GAT~\cite{velivckovic2017graph} introduces the attention mechanism to learn the importance of a node's neighbors
and aggregates information from these neighbors based on the learned weights.
Further,
GNNs have been widely studied from various perspectives,
such as the over-smoothing problem~\cite{zhao2019pairnorm,rong2019dropedge},
the adversarial attack
and defense~\cite{dai2018adversarial,zhu2019robust},
and the model explainability~\cite{vu2020pgm,shan2021reinforcement}.


There are also works~\cite{zhu2020beyond,bo2021beyond,chien2020adaptive,yan2021two,suresh2021breaking,pei2020geom,dong2021graph,lim2021large,yang2021graph,luan2021heterophily,zhu2020graph,liu2021non}
that extend
GNNs to heterophilous graphs. 
Some methods propose to leverage both low-pass and high-pass convolutional filters 
in neighborhood aggregation.
For example,
GPR-GNN~\cite{chien2020adaptive}
adapts to the homophily/heterophily structure of a graph
by learning signed weights for node embeddings in different propagation steps.
ACM-GCN~\cite{luan2021heterophily} applies both low-pass and high-pass filters for each node in a layer,
and adaptively fuses the generated node embeddings from each filter.
Further,
some methods enlarge the node neighborhood size to include more homophilous nodes.
As a representative model, 
H$_2$GCN~\cite{zhu2020beyond} 
presents three designs to improve the performance of GNNs under heterophily:
ego and neighbor embedding separation, 
higher-order neighborhood utilization and 
intermediate representation combination.
WRGAT~\cite{suresh2021breaking}
transforms the original graph into a multi-relational one that 
contains both raw edges and newly constructed edges.
The new edges 
can connect distant nodes and are weighted by 
node local structural similarity.
There also exist GNNs that study the graph heterophily issue from other perspectives.
For example,
to jointly study
the heterophily and over-smoothing problems,
GGCN~\cite{yan2021two} allows for signed messages to be propagated from a node's neighborhood.
On the other hand,
it adopts a degree correction mechanism to rescale node degrees and further alleviate the over-smoothing problem.
To generalize GNNs to large-scale graphs,
LINKX~\cite{lim2021large}
separately embeds node features and graph topology.
After that,
the two embeddings are combined with MLPs to generate node embeddings.
Different from all these methods,
\ada\ performs node neighborhood aggregation from the whole set of nodes in the graph,
which takes more nodes in the same class as neighbors and thus boosts the performance of GNNs on graphs with heterophily.


\section{Preliminaries}
\label{sec:preliminary}
In this section we introduce notations and concepts used in this paper.

\noindent \textbf{[Notations]}.
We denote an undirected graph without self-loops as $\mathcal{G} = (\mathcal{V}, \mathcal{E})$,
where $\mathcal{V} = \{v_i\}_{i=1}^n$ is a set of nodes and
$\mathcal{E} \subseteq \mathcal{V} \times \mathcal{V}$ is a set of edges.
Let $A$ denote the adjacency matrix such that 
$A_{ij}$ represents the weight of edge $e_{ij}$ between nodes $v_i$ and $v_j$.
For each node $v_i$,
we use $\mathcal{N}_i$ to denote $v_i$'s neighborhood,
which is the set of nodes directly connected to $v_i$.
We further construct a diagonal matrix $D$ where $D_{ii}= \sum_{j=1}^n{A_{ij}}$.
We denote the node representation matrix in the $l$-th layer as $H^{(l)}$,
where the $i$-th row is the embedding vector $h_i^{(l)}$ of node $v_i$.
For the initial node feature matrix,
we denote it as $X$.
We use $Y\in \mathbb{R}^{n\times c}$ to denote the ground-truth node label matrix,
where $c$ is the number of labels in node classification and
the $i$-th row $y_i$ is the one-hot encoding of node $v_i$'s label.

\noindent \textbf{[Homophily/Heterophily]}.
The homophily/heterophily of a graph is typically defined 
based on the similarity/dissimilarity between two connected nodes w.r.t. node features or node labels.
In this paper,
we focus on homophily/heterophily in node labels.
There have been some metrics of homophily proposed.
For example,
\emph{edge homophily}~\cite{zhu2020beyond} is defined as the fraction of edges that connect nodes with the same label.
Further,
high homophily indicates low heterophily, and vice versa.
We thus interchangeably use these two terms in this paper.

\noindent \textbf{[GNN basics]}.
The convolution operation in GNNs is typically composed of two steps:
(1) feature propagation and aggregation:
$\hat{h}_i^{(l)} = \texttt{AGGREGATE}(h_j^{(l)}, \forall v_j \in \mathcal{N}_i)$;
(2) node embedding updating:
$h_i^{(l+1)} = \texttt{Update}(h_i^{(l)}, \hat{h}_i^{(l)})$.
One of the most widely used GNN models is  
vanilla GCN~\cite{kipf2016semi},
which adopts a renormalization trick to add a self-loop to each node in the graph.
After that,
the normalized affinity matrix $\hat{{A}} = \tilde{D}^{-\frac{1}{2}}\tilde{A}\tilde{D}^{-\frac{1}{2}}$,
where $\tilde{A} = A + I_n$, $\tilde{D} = D + I_n$ and $I_n \in \mathbb{R}^{n\times n}$ is the identity matrix.
Note that $\hat{{A}}$ is a low-pass filter while the corresponding Laplacian matrix $L = I_n - \hat{{A}}$ is a high-pass filter.
The output in the $(l+1)$-th layer of vanilla GCN is 
$H^{(l+1)} = \sigma(\hat{A} H^{(l)}W^{(l)})$,
where $W^{(l)}$ is a learnable weight matrix and $\sigma$ is the \texttt{Relu} function.
After $L$ layers,
$H^{L}$ is then subsequently fed into a \texttt{softmax} layer to generate label probability logits and 
a \texttt{cross-entropy} function for node classification.


\section{Algorithm}
\label{sec:algorithm}
In this section,
we describe our models.
We first show how to capture node correlations and derive the coefficient matrix $Z$ in each layer.
After that,
we introduce how to accelerate neighborhood aggregation based on $Z$.
Further,
we
theoretically prove that both $Z$ and $H$ have the desired grouping effect.
Finally,
we summarize \ada\ and upgrade the model to \ada++. 
The overall framework of \ada\ is given in Figure~\ref{fig:framework}.

\begin{figure}[!htbp]
    \centering
        \includegraphics[width = 0.8\linewidth]{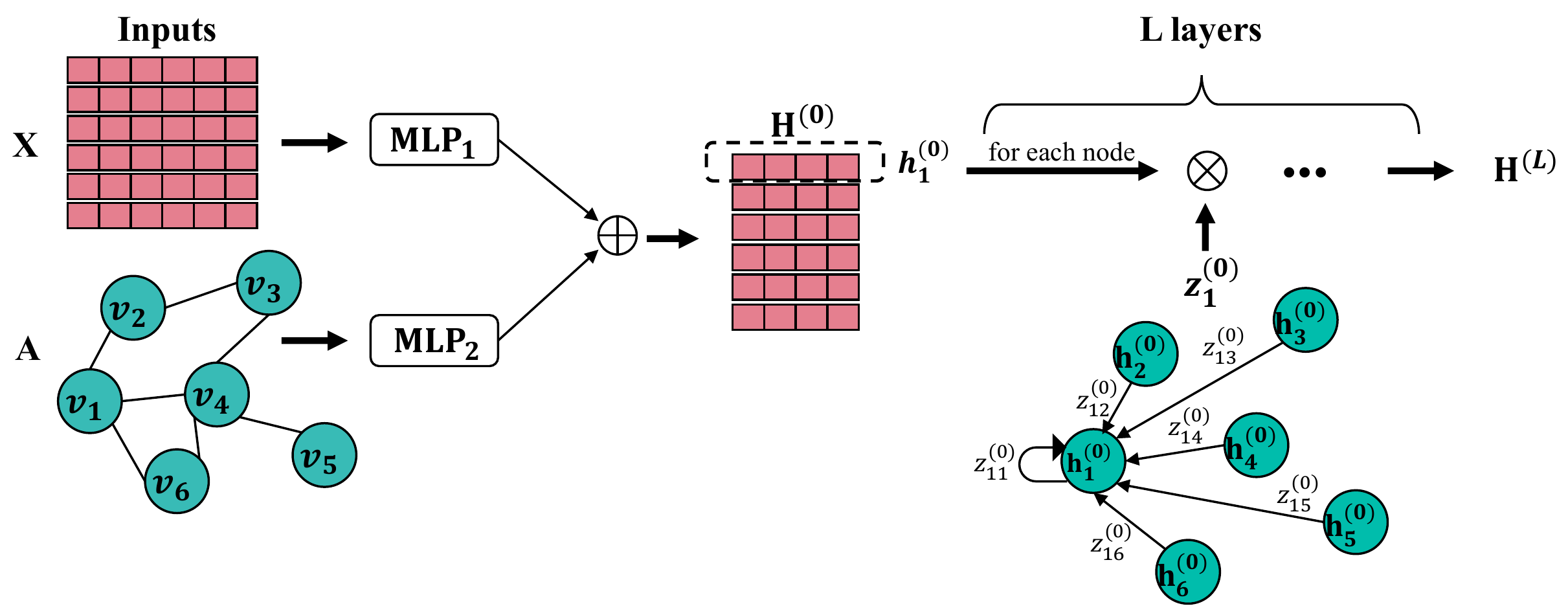}
        \caption{The overall framework of \ada. In each layer,
        we derive a coefficient matrix, based on which a node's embedding is generated by aggregating information from global nodes.}
        \label{fig:framework}
\end{figure}

\subsection{Coefficient matrix}
In graphs with heterophily,
connected nodes are more likely to have different labels
while distant nodes could be from the same class.
We thus have to enlarge a node's neighborhood to leverage more distant nodes.
A straightforward way is to use all the nodes in the graph.
Inspired by the linear subspace model~\cite{liu2012robust},
we characterize each node by all the nodes in the graph.
Specifically,
in the $l$-th layer,
we can determine a coefficient matrix $Z^{(l)} \in \mathbb{R}^{n\times n}$:
\begin{equation}
\label{eq:coe}
H^{(l)} = Z^{(l)}H^{(l)} + O^{(l)},
\end{equation}
where $O^{(l)}$ is a noise matrix.
One can interpret $Z_{ij}^{(l)}$ as a value that reflects how well node $x_j$ characterizes $x_i$,
so $Z^{(l)}$ plays the role of a weight matrix.
Note that $Z_{ij}^{(l)}$ allows signed values.
On the one hand,
the more similar two nodes are,
the more likely that one node can be represented by the other.
Homophilous neighbors will thus be assigned large positive coefficients.
On the other hand,
heterophilous neighbors will be given small positive or negative coefficients due to the dissimilarities.
Further,
since
it has been pointed out in~\cite{liu2020towards} that the over-smoothing problem
is caused by the coupling of neighborhood aggregation and feature transformation,
we decouple these two processes by 
first performing
feature transformation to generate $H^{(0)}$.
Specifically,
we use MLPs to map feature matrix and adjacency matrix 
into $H_X^{(0)} \in \mathbb{R}^{n\times c}$ and $H_A^{(0)} \in \mathbb{R}^{n\times c}$, respectively:
\begin{equation}
\label{eq:h0xa}
H_X^{(0)} = \texttt{MLP}_1(X),\ H_A^{(0)} = \texttt{MLP}_2(A).
\end{equation}
Here, $c$ is the number of labels.
Then
we derive the initial node embedding matrix $H^{(0)}$:
\begin{equation}
\label{eq:h0}
H^{(0)} = (1-\alpha)H_X^{(0)}  + \alpha H_A^{(0)},
\end{equation}
where $\alpha \in [0,1] $ is the term weight.
Note that $H^{(0)}$ captures the information of both nodes' features and connectivities. 
Inspired by~\cite{klicpera2018predict},
we use skip connection 
and further modify $H^{(l)}$ in Equation~\ref{eq:coe} into:
\begin{equation}
\label{eq:coe1}
H^{(l)} = (1-\gamma) Z^{(l)}H^{(l)} + \gamma H^{(0)} + O^{(l)},
\end{equation}
where $\gamma \in [0,1]$ is a hyper-parameter that balances the term importance.
Here,
we characterize node correlations based on $H^{(l)}$.
However,
as suggested in~\cite{suresh2021breaking}, if two nodes share similar local graph structures,
they are more likely to have the same label.
We thus measure node correlations in terms of both feature similarity and topology similarity
by further regularizing $Z^{(l)}$ with nodes' multi-hop reachabilities.
We have the following objective function:
\begin{equation}
\label{eq:obj}
\scriptsize
\min_{Z^{(l)}} \Vert H^{(l)}-(1-\gamma)Z^{(l)}H^{(l)}-\gamma H^{(0)}\Vert_F^2  +\beta_1 \Vert Z^{(l)} \Vert_F^2 + \beta_2 \Vert Z^{(l)}-\sum_{k=1}^{K}\lambda_{k}\hat{A}^k\Vert_F^2
\end{equation}
where $\beta_1$ and $\beta_2$ are weighting factors for adjusting the importance 
of different components,
and $K$ is the maximum hop count.
To show the importance of the $k$-hop graph connectivity,
we further introduce a
learnable parameter
$\lambda_k$.
The objective function consists of three terms.
The first term 
reduces noise and
drives the linear representation for nodes to be close to their own embeddings,
the second term is a Frobenius norm, 
and the third term regularizes $Z^{(l)}$ by the multi-hop regularized graph adjacency matrices.
A closed-form solution $Z^{{(l)}*}$ to the optimization problem is 
\begin{equation}
\label{eq:zstar}
\scriptsize
\nonumber
\begin{split}
Z^{(l)*}   = &  \left [(1-\gamma)H^{(l)}(H^{(l)})^T +\beta_2 \sum_{k=1}^{K}\lambda_{k}\hat{A}^k - \gamma(1-\gamma) H^{(0)}(H^{(l)})^T\right ]  \cdot \\
& \left[(1-\gamma)^2H^{(l)}(H^{(l)})^T + (\beta_1 + \beta_2) I_n\right ]^{-1}\\
\end{split}
\end{equation}

\subsection{Aggregation acceleration}
\label{sec:acc}
Based on $Z^{(l)*}$,
we can write neighborhood aggregation in the $l$-th layer: 
\begin{equation}
\label{eq:conv}
H^{(l+1)} = (1-\gamma) Z^{(l)*} H^{(l)} + \gamma H^{(0)}.
\end{equation}
However,
directly updating $H^{(l+1)}$ by Equation~\ref{eq:conv} is infeasible
due to the cubic time complexity in computing
$Z^{(l)*}$ and the quadratic time complexity in calculating $Z^{(l)*} H^{(l)}$.
To accelerate the updates of $H^{(l+1)}$,
instead of directly calculating $Z^{(l)*}$,
we transform Equation~\ref{eq:conv} into (see Section~\ref{sec:agg} in the appendix):
{\begin{scriptsize}
\begin{equation}
\label{eq:Hexpansion}
\begin{split}
H^{(l+1)}  = & (1-\gamma)H^{(l)} (H^{(l)})^TQ^{(l+1)}  + \beta_2 \sum_{k=1}^{K}\lambda_{k}\hat{A}^k Q^{(l+1)}  \\
& - \gamma(1-\gamma) H^{(0)} (H^{(l)})^T  Q^{(l+1)}  + \gamma H^{(0)} \\
\end{split}
\end{equation}
\end{scriptsize}
}
where 
\begin{scriptsize}
\begin{equation}
\label{eq:q}
\begin{split}
 Q^{(l+1)}  =  & \frac{1-\gamma}{\beta_1+\beta_2} H^{(l)}  - \frac{1-\gamma}{(\beta_1+\beta_2)^2} H^{(l)} \cdot \\
& \left[ \frac{1}{(1-\gamma)^2}I_c + \frac{1}{\beta_1+\beta_2} (H^{(l)})^TH^{(l)} \right]^{-1}(H^{(l)})^T H^{(l)} \\
\end{split}
\end{equation}
\end{scriptsize}
Here,
$H^{(l)}, Q^{(l+1)} \in \mathbb{R}^{n\times c}$, $I_c \in \mathbb{R}^{c\times c}$ is the identity matrix and $c$ is the number of labels.
In this way,
we avoid computing $Z^{(l)*}$ directly and can
accelerate the updates of $H^{(l+1)}$ by matrix multiplication reordering.
Specifically,
we first compute $Q^{(l+1)}$,
where the second term is calculated from right to left.
In particular,
the matrix inversion
is
performed on a matrix in $\mathbb{R}^{c\times c}$,
whose time complexity
is only $O(c^3)$ and $c$ is generally a very small number.
This significantly improves the model efficiency. 
The overall time complexity of updating $Q^{(l+1)}$ is $O(nc^2 + c^3)$, where $c^2 \ll n$.
After $Q^{(l+1)}$ is calculated,
we  
then update $H^{(l+1)}$.
We compute each term of $H^{(l+1)}$ in a similar right-to-left manner.
For example,
to calculate $H^{(l)}(H^{(l)})^TQ^{(l+1)}$,
we first compute $(H^{(l)})^TQ^{(l+1)}$ but not $H^{(l)}(H^{(l)})^T$.
This reduces the time complexity to be $O(nc^2)$, but not $O(n^2c)$.
When calculating $\sum_{k=1}^{K}\lambda_{k}\hat{A}^k Q^{(l+1)}$,
we can first compute $\hat{A} Q^{(l+1)}$ due to the sparsity of $\hat{A}$ for a general graph,
which only requires a time complexity of $O(k_1cn)$.
Here, $k_1$ is the average number of nonzero entries in a row of $\hat{A}$.
While $\hat{A}Q^{(l+1)}$ generates a dense matrix,
we can further employ the sparsity of $\hat{A}$ to get $\hat{A}^2Q^{(l+1)}$.
In this way,
we can sequentially derive $\hat{A} Q^{(l+1)}$, $\hat{A}^2 Q^{(l+1)}$, ..., $\hat{A}^K Q^{(l+1)}$ in $O(k_1cn)$.
In summary,
the total time complexity 
to update $H^{(l+1)}$ by Equation~\ref{eq:Hexpansion}
is $O(k_2n)$, where $k_2$ is a coefficient and $k_2 \ll n$.

\subsection{Grouping effect}
\label{sec:ge}
For a node $v_i$,
we denote $\hat{a}_i^k$ as
the $i$-th row of $\hat{A}^k$,
which represents $v_i$'s $k$-hop node reachability in a graph.
Given two nodes $v_i$ and $v_j$,
if they have similar feature vectors and local graph structures,
their characterizations from other nodes are expected to be similar.
Formally, we have

\begin{definition}
\textbf{(Grouping effect~\cite{li2020cast})}.
Given a set of nodes $\mathcal{V} = \{v_i\}_{i=1}^n$,
let $v_i \rightarrow v_j$ denote the condition that 
(1) $\lVert x_i - x_j \rVert_2 \rightarrow 0$
and (2) $\lVert \hat{a}_i^k - \hat{a}_j^k\rVert_2 \rightarrow 0,\ \forall k \in [1,K]$.
A matrix $Z$ is said to have grouping effect if 
\begin{equation}
v_i \rightarrow v_j \Rightarrow |Z_{ip} - Z_{jp}| \rightarrow 0, \forall 1 \leq p \leq n.
\end{equation}
\end{definition}

We next theoretically show the grouping effect of $Z^{(l)*}$, $(Z^{(l)*})^T$ and $H^{(l+1)}$
by giving the following lemmas.
Proofs of all these lemmas
are deferred to Section~\ref{sec:proof} in the appendix.

\begin{lemma}
\label{lemma2}
$\forall 1 \leq i, j, p \leq n$,
\begin{scriptsize}
\begin{equation}
\label{eq:zip}
\nonumber
\begin{split}
|Z_{ip}^{(l)*}-Z_{jp}^{(l)*}| & \leq \frac{1-\gamma}{\beta_1 + \beta_2}\Vert {h}_i^{(l)} - {h}_j^{(l)} \Vert_2 \Vert (h_p^{(l)})^T - (1-\gamma)^2(H^{(l)})^T R \Vert_2   \\
 & +\frac{\gamma(1-\gamma)}{\beta_1 + \beta_2} \Vert h_i^{(0)} - h_j^{(0)}\Vert_2 \Vert  (h_p^{(l)})^T - (1-\gamma)^2(H^{(l)})^T R   \Vert_2 \\
 &+\frac{\beta_2(1-\gamma)^2}{\beta_1 + \beta_2} \sum_{k=1}^{K}\lambda_{k}\Vert \hat{a}_i^k - \hat{a}_j^k\Vert_2 \Vert R \Vert_2 \\
 & + \frac{\beta_2}{\beta_1+\beta_2} \sum_{k=1}^{K}\lambda_{k}|\hat{A}_{{ip}}^k - \hat{A}_{{jp}}^k|  \\
\end{split}
\end{equation}
\end{scriptsize}
where {\small $R = \left [(1-\gamma)^2H^{(l)}(H^{(l)})^T + (\beta_1+\beta_2) I_n\right ]^{-1} H^{(l)} (h_p^{(l)})^T $}.
\end{lemma}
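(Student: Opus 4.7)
The plan is to read $Z^{(l)*}$ off row-by-row, isolate the $p$-th column of the big inverse, and then rewrite that column cleanly so that the vector $R$ appears naturally. Writing $P=H^{(l)}(H^{(l)})^T$, $S=H^{(0)}(H^{(l)})^T$, $B=\sum_k\lambda_k\hat A^k$, and $M=\bigl[(1-\gamma)^2 P+(\beta_1+\beta_2)I_n\bigr]^{-1}$, the closed form gives
\[
Z^{(l)*}=\bigl[(1-\gamma)P+\beta_2 B-\gamma(1-\gamma)S\bigr]M.
\]
Let $m_p=Me_p$ be the $p$-th column of $M$. Reading off the $(i,p)$-entry,
\[
Z_{ip}^{(l)*}=(1-\gamma)\,h_i^{(l)}(H^{(l)})^T m_p+\beta_2\sum_{k=1}^{K}\lambda_k \hat a_i^k m_p-\gamma(1-\gamma)\,h_i^{(0)}(H^{(l)})^T m_p,
\]
and subtracting the analogous expression for $j$ gives a three-term difference in which the factors $(h_i^{(l)}-h_j^{(l)})$, $(h_i^{(0)}-h_j^{(0)})$, and $(\hat a_i^k-\hat a_j^k)$ appear explicitly.

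The key algebraic step is to rewrite $m_p$ in terms of $R$. Since $M^{-1}m_p=e_p$, I will use
\[
(1-\gamma)^2 P m_p+(\beta_1+\beta_2)m_p=e_p
\]
together with the observation that $P e_p = H^{(l)}(h_p^{(l)})^T$, so $P$ and $M$ commute (both are polynomials in $P$) and $R=MPe_p=Pm_p$. This yields the crucial identity
\[
m_p=\tfrac{1}{\beta_1+\beta_2}\bigl(e_p-(1-\gamma)^2 R\bigr),
\]
and therefore
\[
(H^{(l)})^T m_p=\tfrac{1}{\beta_1+\beta_2}\bigl((h_p^{(l)})^T-(1-\gamma)^2 (H^{(l)})^T R\bigr),
\]
which is exactly the factor appearing in bounds (1) and (2) of the lemma.

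With these substitutions, the first and third terms of $Z_{ip}^{(l)*}-Z_{jp}^{(l)*}$ are scalar products of $(h_i^{(l)}-h_j^{(l)})$ or $(h_i^{(0)}-h_j^{(0)})$ with $(H^{(l)})^T m_p$, so Cauchy–Schwarz immediately delivers bounds (1) and (2). For the middle term I expand $(\hat a_i^k-\hat a_j^k)m_p$ using the same identity for $m_p$, splitting it into $\tfrac{1}{\beta_1+\beta_2}\bigl(\hat A_{ip}^k-\hat A_{jp}^k\bigr)$ and $-\tfrac{(1-\gamma)^2}{\beta_1+\beta_2}(\hat a_i^k-\hat a_j^k)R$; the triangle inequality over $k$ followed by Cauchy–Schwarz on the second piece yields bounds (3) and (4). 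Summing the four contributions completes the proof.

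The main obstacle is bookkeeping rather than ideas: one must be careful that $h_p^{(l)}$ is a \emph{row} of $H^{(l)}$ so that $(h_p^{(l)})^T$ and $H^{(l)}(h_p^{(l)})^T = Pe_p$ have the right shapes, and must exploit the commutativity of $P$ and $M$ at exactly the right moment to identify $R$ as $Pm_p$. Once the identity for $m_p$ is in place, the rest is a routine application of Cauchy–Schwarz term by term.
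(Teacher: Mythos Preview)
Your proposal is correct and arrives at exactly the same six-term expansion and Cauchy--Schwarz bounds as the paper. The only organizational difference is that the paper first invokes the first-order optimality condition (stated separately as an auxiliary lemma) to write $Z_{ip}^{(l)*}$ in terms of $z_i^{(l)*}$ and then substitutes the closed form of $z_i^{(l)*}-z_j^{(l)*}$, whereas you work directly from the closed form via the identity $m_p=\tfrac{1}{\beta_1+\beta_2}\bigl(e_p-(1-\gamma)^2R\bigr)$; the two routes are algebraically equivalent.
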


\begin{lemma}
\label{lemma3}
$\forall 1 \leq i, j, p \leq n$,
\begin{scriptsize}
\begin{equation}
\label{eq:norm}
\nonumber
|Z_{pi}^{(l)*}-Z_{pj}^{(l)*}| \leq  \frac{\eta (1-\gamma) \lVert h_i^{(l)} - h_j^{(l)}\rVert_2
 + \beta_2 \sum_{k=1}^{K}\lambda_{k}|\hat{A}_{{pi}}^k - \hat{A}_{{pj}}^k|}{\beta_1 + \beta_2}
\end{equation}
\end{scriptsize}
where {\small $\eta = \sqrt{\lVert{h}_p^{(l)}-\gamma h_p^{(0)}\rVert_2^2 + \beta_2 \lVert\sum_{k=1}^{K}\lambda_{k}\hat{a}_{p}^k\rVert_2^2}$}.
\end{lemma}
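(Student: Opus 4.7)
My plan is to read the lemma as a perturbation bound for the $p$-th row of $Z^{(l)*}$ as the column index varies, and to derive it directly from the first-order optimality condition for the problem in Equation~\ref{eq:obj}, combined with the fact that the objective is separable across rows of $Z^{(l)}$.

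First I would differentiate the objective in Equation~\ref{eq:obj} with respect to $Z^{(l)}$, using symmetry of each $\hat{A}^k$, to obtain the normal equation
\[
(1-\gamma)^2 Z^{(l)*} H^{(l)}(H^{(l)})^T + (\beta_1+\beta_2) Z^{(l)*} = (1-\gamma)\bigl(H^{(l)}-\gamma H^{(0)}\bigr)(H^{(l)})^T + \beta_2 \sum_{k=1}^K \lambda_k \hat{A}^k.
\]
Rearranging gives
\[
(\beta_1+\beta_2)\, Z^{(l)*} = (1-\gamma)\, E\, (H^{(l)})^T + \beta_2 \sum_{k=1}^K\lambda_k \hat{A}^k,
\]
where $E := H^{(l)} - \gamma H^{(0)} - (1-\gamma) Z^{(l)*} H^{(l)}$ is the residual of the data-fitting term. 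Reading off the $(p,i)$ and $(p,j)$ entries and subtracting yields
\[
(\beta_1+\beta_2)\bigl(Z_{pi}^{(l)*}-Z_{pj}^{(l)*}\bigr) = (1-\gamma)\, E_{p,\cdot}\bigl(h_i^{(l)}-h_j^{(l)}\bigr) + \beta_2\sum_{k=1}^K\lambda_k\bigl(\hat{A}^k_{pi}-\hat{A}^k_{pj}\bigr).
\]
Cauchy–Schwarz on the first term and the triangle inequality (together with $\lambda_k\ge 0$) on the second reduce the lemma to proving the row-wise residual bound $\lVert E_{p,\cdot}\rVert_2 \le \eta$.

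For that bound I would exploit row-wise separability: each of the three Frobenius-norm terms in Equation~\ref{eq:obj} decomposes as a sum over the rows of $Z^{(l)}$, so the full problem splits into $n$ independent subproblems. Letting $c^{p} \in \mathbb{R}^n$ denote the $p$-th column of $\sum_k\lambda_k\hat{A}^k$ (whose norm equals $\lVert\sum_k\lambda_k\hat{a}_p^k\rVert_2$ by symmetry), the $p$-th subproblem is
\[
\min_{z\in\mathbb{R}^n}\; \bigl\lVert (h_p^{(l)}-\gamma h_p^{(0)}) - (1-\gamma)(H^{(l)})^T z\bigr\rVert_2^2 + \beta_1\lVert z\rVert_2^2 + \beta_2\lVert z - c^{p}\rVert_2^2.
\]
Comparing the optimal value against the feasible point $z=0$ immediately gives
\[
\lVert E_{p,\cdot}\rVert_2^2 \le \lVert h_p^{(l)}-\gamma h_p^{(0)}\rVert_2^2 + \beta_2\lVert c^{p}\rVert_2^2 = \eta^2,
\]
which is exactly the required bound.

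The only real subtlety is seeing that one must argue row-wise: a direct Frobenius bound from $f(Z^{(l)*})\le f(0)$ would only control $\sum_p\lVert E_{p,\cdot}\rVert_2^2$ in aggregate, whereas the lemma needs the $p$-specific bound $\lVert E_{p,\cdot}\rVert_2\le\eta$. The decomposition of the objective into independent row problems makes this localization free; once it is in place, combining the two displayed inequalities above closes the proof without ever inverting a matrix or invoking spectral norms (which is why the bound is considerably cleaner than the row-difference bound of Lemma~\ref{lemma2}).
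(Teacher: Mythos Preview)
Your proposal is correct and follows essentially the same route as the paper: the paper first records the entrywise first-order optimality condition (its Lemma~\ref{lemma1}), subtracts the $(p,i)$ and $(p,j)$ entries, applies Cauchy--Schwarz, and then bounds the row residual $\lVert h_p^{(l)}-(1-\gamma)z_p^{(l)*}H^{(l)}-\gamma h_p^{(0)}\rVert_2$ by $\eta$ via the row-separable inequality $J(z_p^{(l)*})\le J(0)$. Your matrix-form normal equation and your row-subproblem argument are just the same steps written at the matrix level rather than entry by entry, so the two proofs coincide in substance.
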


\begin{lemma}
\label{lemma:z-star}
Matrices $Z^{(l)*}$, $(Z^{(l)*})^T$ and $H^{(l+1)}$ all have grouping effect.
\end{lemma}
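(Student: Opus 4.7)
The plan is to treat the three claims as a single induction on the layer index $l$. The bounds of Lemmas~\ref{lemma2} and~\ref{lemma3} already produce the grouping effect for $Z^{(l)*}$ and $(Z^{(l)*})^T$ as soon as one knows that $\lVert h_i^{(l)} - h_j^{(l)} \rVert_2 \to 0$ and $\lVert h_i^{(0)} - h_j^{(0)} \rVert_2 \to 0$ whenever $v_i \to v_j$. Hence the real content is (a) establishing the base case for $H^{(0)}$, and (b) propagating the property from $H^{(l)}$ to $H^{(l+1)}$ so that Lemma~\ref{lemma2} can be reapplied at the next layer.

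For the base case I would unpack Equations~\ref{eq:h0xa}--\ref{eq:h0}: $h_i^{(0)} = (1-\alpha)\,\texttt{MLP}_1(x_i) + \alpha\,\texttt{MLP}_2(a_i)$, where $a_i$ is the $i$-th row of $A$. Assuming the MLPs are Lipschitz (standard for bounded-weight networks with Lipschitz activations), and noting that $\hat{a}_i^1 \to \hat{a}_j^1$ controls $a_i - a_j$ up to bounded factors coming from $\tilde D^{1/2}$, one obtains $\lVert h_i^{(0)} - h_j^{(0)} \rVert_2 \to 0$.

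For the inductive step assume $H^{(l)}$ has the grouping effect. Then in the right-hand side of Lemma~\ref{lemma2} the first term vanishes by the inductive hypothesis, the second by the base case, the third directly by the definition of $v_i \to v_j$, and the fourth via $|\hat{A}_{ip}^k - \hat{A}_{jp}^k| \le \lVert \hat{a}_i^k - \hat{a}_j^k \rVert_2$ (using the symmetry of $\hat{A}^k$). The accompanying factors such as $\lVert R \rVert_2$ and $\lVert (h_p^{(l)})^T - (1-\gamma)^2 (H^{(l)})^T R \rVert_2$ remain bounded because the regularizer $(\beta_1+\beta_2) I_n$ forces $\lVert [(1-\gamma)^2 H^{(l)}(H^{(l)})^T + (\beta_1+\beta_2) I_n]^{-1} \rVert_2 \le 1/(\beta_1+\beta_2)$ and $H^{(l)}$ has finite Frobenius norm. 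This yields the grouping effect of $Z^{(l)*}$, and the analogous reading of Lemma~\ref{lemma3} gives it for $(Z^{(l)*})^T$. To close the induction I would use Equation~\ref{eq:conv} and write
\[
h_i^{(l+1)} - h_j^{(l+1)} = (1-\gamma)\sum_{p=1}^{n} \bigl(Z_{ip}^{(l)*} - Z_{jp}^{(l)*}\bigr)\, h_p^{(l)} + \gamma \bigl(h_i^{(0)} - h_j^{(0)}\bigr),
\]
and invoke the triangle inequality together with the pointwise vanishing $|Z_{ip}^{(l)*} - Z_{jp}^{(l)*}| \to 0$ just established and the base case.

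The main obstacle I anticipate is bookkeeping the boundedness hypotheses: forcing the right-hand sides of Lemmas~\ref{lemma2} and~\ref{lemma3} to actually drive the coefficient differences to zero requires each accompanying norm (including $\lVert R \rVert_2$, $\lVert H^{(l)} \rVert_2$, and the individual row norms $\lVert h_p^{(l)} \rVert_2$) to remain uniformly bounded while $v_i \to v_j$ is taken. I would state this as a mild regularity assumption on the trained network (bounded embeddings and Lipschitz MLPs), which is implicit in Lemmas~\ref{lemma2} and~\ref{lemma3} themselves. Beyond that, everything is standard inductive wiring.
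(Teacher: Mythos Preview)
Your proposal is correct and follows essentially the same inductive route as the paper: establish the grouping effect of $H^{(0)}$ from Equations~\ref{eq:h0xa}--\ref{eq:h0}, feed that into Lemmas~\ref{lemma2} and~\ref{lemma3} to obtain the grouping effect of $Z^{(l)*}$ and $(Z^{(l)*})^T$, and then propagate to $H^{(l+1)}$ via Equation~\ref{eq:conv}. You are in fact more explicit than the paper about the needed regularity (Lipschitz MLPs, boundedness of $\lVert R\rVert_2$ and the row norms), which the paper's proof simply takes for granted; one cosmetic note is that symmetry of $\hat{A}^k$ is needed only for the $(Z^{(l)*})^T$ case (the term $|\hat{A}_{pi}^k - \hat{A}_{pj}^k|$ in Lemma~\ref{lemma3}), not for the fourth term of Lemma~\ref{lemma2}.
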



The grouping effect of $Z^{(l)*}$, $(Z^{(l)*})^T$ and $H^{(l+1)}$
indeed explains the effectiveness of our model.
In fact, 
for any two nodes $v_i$ and $v_j$,
no matter how distant they are in a graph,
if they share similar feature vectors and local structures,
we conclude that
(1) they will be given similar coefficient vectors;
(2) they will play similar roles in characterizing other nodes;
and (3) they
will be given similar representation vectors.
On the other hand,
in graphs with heterophily,
adjacent nodes are more likely to be dissimilar
and they will thus be given different embeddings.
Further,
for two nodes with low feature similarity,
using one to characterize the other can be enhanced by the regularization term of local graph structure,
if they share high structural similarity.
This also applies to nodes that have high feature similarity but low structural similarity.
After $L$ convolutional layers,
we derive $H^{(L)}$.
We then normalize  $H^{(L)}$ by a \texttt{Softmax} layer,
whose results are further fed into the \texttt{Cross-entropy} function for classification.
Finally,
we summarize the pseudocodes of \ada\ in Algorithm~\ref{alg} (Section~\ref{sec:sup} of the appendix).

\subsection{\ada++}
Given $H^{l}$,
the coefficient matrix $Z^{(l)*}$ in Equation~\ref{eq:conv} plays the role of ``longitudinal'' attention
that characterizes the importance of a node to another.
In neighborhood aggregation,
not only varies the importance of a node's neighbors, but also that of hidden features.
For example,
in node classification,
the imbalance of node labels could lead to the various importance of hidden features corresponding to different labels.
Therefore,
we further upgrade our model by considering ``horizontal'' attention w.r.t. hidden features.
We introduce a diagonal matrix $\Sigma \in \mathbb{R}^{c\times c}$ such that 
$\Sigma_{ii}$ describes the importance of the $i$-th dimension in $H^{(l)}$.
We modify Equation~\ref{eq:obj} into:
\begin{equation}
\scriptsize
\label{eq:obj_new}
\min_{Z^{(l)}} ||H^{(l)}-(1-\gamma)Z^{(l)}H^{(l)}\Sigma-\gamma H^{(0)}||_F^2  +\beta_1 \Vert Z^{(l)} \Vert_F^2 + \beta_2 ||Z^{(l)}-\sum_{k=1}^{K}\lambda_{k}\hat{A}^k||_F^2,
\end{equation}
and derive the optimal solution $Z^{(l)*}$:
\begin{equation}
\label{eq:zstar_new}
\scriptsize
\begin{split}
\nonumber
Z^{(l)*}   = &  \left [(1-\gamma)H^{(l)}\Sigma (H^{(l)})^T +\beta_2 \sum_{k=1}^{K}\lambda_{k}\hat{A}^k - \gamma(1-\gamma) H^{(0)}\Sigma (H^{(l)})^T\right ]  \cdot \\
& \left[(1-\gamma)^2H^{(l)}\Sigma \Sigma(H^{(l)})^T + (\beta_1 + \beta_2) I_n\right ]^{-1}\\
\end{split}
\end{equation}
Following the same procedure in Sec.~\ref{sec:acc} and~\ref{sec:ge},
we can also accelerate neighborhood aggregation and further prove that such $Z^{(l)*}$, $(Z^{(l)*})^T$ and $H^{(l+1)}$ have the desired grouping effect.
We omit the details due to the space limitation.

\subsection{Discussion}
We next discuss 
the major differences between our models and the adapted GAT model that takes global nodes as a node's neighbors.
First,
the attention weights in GAT are automatically learned and lack of interpretability, 
but $Z^{(l)}$ in our models is derived from a well-designed optimization problem and has a closed-form solution.
Second,
the attention weights in GAT are always non-negative values while $Z^{(l)}$ in our methods allows signed values. 
Therefore,
GAT only employs low-pass convolutional filters while our methods combine both low-pass and high-pass filters.
Third,
for each node,
the neighborhood aggregation performed by GAT over all the nodes in the graph
is computationally expensive, 
which has a quadratic time complexity w.r.t. the number of nodes.
However,
our methods
accelerate the aggregation and derive a linear time complexity.

\section{Experiments}
\label{sec:exp}
In this section,
we comprehensively evaluate the performance of \ada\ and \ada++.
In particular,
we compare them with 11 other methods on 15 benchmark datasets,
to show the effectiveness and efficiency of our models.
Due to the space limitation,
we move experimental setup (Sec.~\ref{sec:ab}) and ablation study (Sec.~\ref{sec:setup}) to the appendix.

\subsection{Datasets}
For fairness,
we conduct experiments on 15 benchmark datasets,
which include 9 small-scale datasets released by~\cite{pei2020geom} and 6 large-scale datasets from~\cite{lim2021large}.
We use the same training/validation/test splits as provided by the original papers.
In particular,
these datasets
span various domains, scales and graph heterophilies.
The statistics of these datasets are summarized in Tables~\ref{tab:result_small} and~\ref{tab:result_large}.
Details on these datasets can be found in Section~\ref{sec:datasets} of the appendix.

\begin{table*}[!htbp]
\centering
\caption{The classification accuracy (\%) over the methods on 9 small-scale datasets released in~\cite{pei2020geom}.
The error bar ($\pm$) denotes the standard deviation score of results over 10 trials. 
We highlight the best score on each dataset in bold and the runner-up score with underline.
Note that Edge Hom.~\cite{zhu2020beyond} is defined as the fraction of edges that connect nodes with the same label.}
\label{tab:result_small}
\resizebox{\linewidth}{!}
{
\begin{tabular}{|c|c|c|c|c|c|c|c|c|c||c|c|}
\hline
                    & \textbf{Texas} & \textbf{Wisconsin}     & \textbf{Cornell} & \textbf{Actor} & \textbf{Squirrel} & \textbf{Chameleon} & \textbf{Cora} & \textbf{Citeseer} & \textbf{Pubmed} &  \parbox[t]{2mm}{\multirow{5}{*}{\rotatebox[origin=c]{90}{\textbf{Avg. Rank}}}} \\
\textbf{ Edge Hom.}  &   0.11    &      0.21     &  0.30  &  0.22  & 0.22  &  0.23 &  0.81 &  0.74 &  0.80 &   \\
\textbf{\#Nodes}  &   183    &   251        &  183  &  7,600  & 5,201 & 2,277  & 2,708 & 3,327 &  19,717 &  \\
 \textbf{\#Edges}  &    295   &    466       &  280  &  26,752  & 198,493 & 31,421  &  5,278 & 4,676  &  44,327 &   \\
 \textbf{\#Features}  &   1,703    &    1,703       &  1,703   &  931  &  2,089 &  2,325 &  1,433 & 3,703  & 500  &   \\
 \textbf{\#Classes} &    5   &     5      & 5  &  5  & 5 & 5  & 6 & 7  &  3  & \\ \hline   
     MLP                   &  $80.81 \pm 4.75$  &     $85.29\pm 3.31$        &  $81.89 \pm 6.40 $  &  $36.53 \pm 0.70$  &  $28.77 \pm 1.56$ & $46.21 \pm 2.99 $  & $75.69 \pm 2.00 $  & $74.02 \pm 1.90$  & $87.16\pm 0.37$    & 9.72 \\
     GCN                    &  $55.14 \pm 5.16$    &  $51.76 \pm 3.06$          & $60.54 \pm 5.30$ & $27.32\pm 1.10$ &   $53.43\pm 2.01$  & $64.82 \pm 2.24$ & $86.98 \pm 1.27$ &  $76.50 \pm 1.36$    &  $88.42 \pm 0.50$  & 10.22 \\
     GAT                    &  $52.16 \pm 6.63$   &   $49.41 \pm 4.09$          &$61.89 \pm 5.05$ & $27.44 \pm 0.89$ &  $40.72 \pm 1.55$ &  $60.26 \pm 2.50$  & $ 87.30 \pm 1.10$ & $76.55 \pm 1.23$ & $86.33 \pm 0.48 $  &  11.11  \\    
      MixHop               &   $77.84 \pm 7.73$   &     $ 75.88\pm 4.90$       & $73.51 \pm 6.34$ & $ 32.22\pm 2.34$ & $43.80\pm 1.48$ & $ 60.50\pm 2.53$  & $ 87.61\pm 0.85$ & $ 76.26\pm 1.33$ & $ 85.31\pm 0.61$  &  10.11   \\  
     GCN\rom{2}                 & $77.57 \pm 3.83$  &   $80.39 \pm 3.40$          & $77.86 \pm 3.79$ &  $37.44 \pm 1.30$  & $38.47 \pm 1.58$ & $63.86 \pm 3.04$  & $\bm{88.37 \pm 1.25}$ & $\underline{77.33 \pm 1.48}$ &  $\bm{90.15 \pm 0.43}$   & 5.89 \\
    H$_2$GCN               &  ${84.86 \pm 7.23}$    &    ${87.65 \pm 4.98}$        & $82.70 \pm 5.28$ & $35.70 \pm 1.00$ & $36.48 \pm 1.86$ & $60.11\pm 2.15$  & $87.87\pm 1.20$ & $77.11\pm 1.57$  & $89.49 \pm 0.38$    &  6.72 \\ 
    WRGAT               &  $ 83.62 \pm 5.50 $    &    ${ 86.98 \pm 3.78}$        & $ 81.62 \pm 3.90$ & $ 36.53 \pm 0.77 $ & $ 48.85 \pm 0.78 $ & $65.24 \pm 0.87$  & $88.20 \pm 2.26$ & $76.81 \pm 1.89 $  & $ 88.52 \pm 0.92$    & 6.17  \\ 
    GPR-GNN              &   $78.38\pm 4.36$   &     $82.94 \pm 4.21$       & $80.27 \pm 8.11$ & $34.63 \pm 1.22$ & $31.61 \pm 1.24$ & $46.58\pm 1.71$  & $87.95 \pm 1.18$ & $77.13 \pm 1.67$ & $87.54 \pm 0.38$  &  8.83  \\ 
    GGCN              &   $\underline{84.86 \pm 4.55}$   &     $86.86\pm 3.29$       & $\underline{85.68 \pm 6.63}$ & $ \underline{37.54 \pm 1.56}$ & $55.17\pm 1.58$ & $\underline{71.14 \pm 1.84}$  & $87.95 \pm 1.05$ & $77.14 \pm 1.45$ & $89.15 \pm 0.37$  &  3.89  \\ 
       ACM-GCN               &  $ \bm{87.84  \pm 4.40}$   &     $  \bm{88.43 \pm 3.22}$       & $ 85.14 \pm 6.07 $ & $ 36.28 \pm 1.09 $ & $ 54.40  \pm 1.88 $ & $ 66.93 \pm 1.85 $  & $ 87.91 \pm 0.95 $ & $ 77.32  \pm 1.70 $ & $ \underline{90.00 \pm 0.52}$  &  3.78   \\  
       LINKX               &   $ 74.60 \pm 8.37$   &     $75.49  \pm 5.72$       & $ 77.84 \pm 5.81$ & $ 36.10\pm 1.55$ & $\bm{61.81  \pm 1.80}$ & $ 68.42 \pm 1.38$  & $ 84.64 \pm 1.13$ & $ 73.19 \pm 0.99 $ & $ 87.86 \pm $ 0.77 &  8.78  \\  \hline
    \ada               &   $ 84.32 \pm 4.15$   &     $87.06 \pm 3.53$       & $ 83.51 \pm 4.26$ & $ 37.35\pm 1.30$ & $ 57.54 \pm 1.39$ & $ {69.78 \pm 2.42}$  & $ 88.31 \pm 1.13$ & $ \bm{77.41 \pm 1.65}$ & $ {89.62 \pm 0.35}$  &  $\underline{3.22}$  \\  
      \ada++               &   $ 84.05  \pm 4.90$   &     $\underline{88.04 \pm 3.22}$       & $ \bm{85.95 \pm 5.10}$ & $ \bm{37.70 \pm 1.40}$ & $\underline{57.88 \pm 1.76}$ & $ \bm{71.21 \pm 1.84}$  & $ \underline{88.33 \pm 1.09}$ & $ 77.22 \pm 1.78$ & $ 89.24\pm 0.39$  & $\bm{2.56}$   \\  \hline

\end{tabular}
}
\end{table*}

\begin{table*}[h]
\centering
\caption{The classification results (\%) over the methods on 6 large-scale datasets released in~\cite{lim2021large}.
Note that 
we compare the AUC score on 
genius as in~\cite{lim2021large}.
For other datasets, 
we show the classification accuracy.
The error bar ($\pm$) denotes the standard deviation score of results over 5 trials. 
We highlight the best score on each dataset in bold and the runner-up score with underline.
Note that OOM refers to the out-of-memory error.}
\label{tab:result_large}
\resizebox{0.7\linewidth}{!}
{
\begin{tabular}{|c|c|c|c|c|c|c||c|}
\hline
                    & \textbf{Penn94} & \textbf{pokec}     & \textbf{arXiv-year} & \textbf{snap-patents} & \textbf{genius} & \textbf{twitch-gamers} &  \parbox[t]{2mm}{\multirow{5}{*}{\rotatebox[origin=c]{90}{\textbf{Avg. Rank}}}} \\
\textbf{ Edge Hom.}  &  0.47     &    0.44       &  0.22  & 0.07   & 0.61  &  0.54 &  \\
\textbf{\#Nodes}  &   41,554    &      1,632,803     & 169,343   &  2,923,922 &  421,961 & 168,114  &  \\
 \textbf{\#Edges}  &  1,362,229     &    30,622,564       &  1,166,243  &  13,975,788  & 984,979  &  6,797,557 &   \\
  \textbf{\#Features}  & 5     &     65      &    128 &  269   & 12  &  7 &   \\
 \textbf{\#Classes} &    2   &     2      & 5  &  5  & 2 & 2  & \\ \hline   
     MLP                   &  $ 73.61 \pm 0.40 $  &     $ 62.37 \pm 0.02$        &  $ 36.70 \pm 0.21 $  &  $ 31.34 \pm 0.05 $  &  $ 86.68 \pm 0.09 $ & $ 60.92 \pm 0.07 $  & 10.00\\
     GCN                     &  $ 82.47 \pm 0.27 $    &  $ 75.45 \pm 0.17 $          & $ 46.02 \pm 0.26 $ & $45.65 \pm 0.04 $ &   $87.42 \pm 0.37 $  & $ 62.18 \pm 0.26 $ & 7.00\\
     GAT                    &  $ 81.53  \pm 0.55 $   &   $71.77  \pm 6.18 $          &$ 46.05 \pm 0.51 $ & $ 45.37 \pm 0.44 $ &  $ 55.80 \pm 0.87 $ &  $ 59.89 \pm 4.12$  &  8.50 \\    
      MixHop                &   $ 83.47 \pm 0.71 $   &     $ 81.07 \pm 0.16 $       & $ 51.81 \pm 0.17$ & $ 52.16 \pm 0.09 $ & $90.58 \pm 0.16 $ & $ 65.64 \pm 0.27$  &  4.17 \\  
     GCN\rom{2}                  & $ 82.92 \pm 0.59 $  &   $78.94  \pm 0.11$          & $ 47.21\pm 0.28$ &  $ 37.88 \pm 0.69$  & $90.24 \pm 0.09$ & $ 63.39 \pm 0.61$  & 6.00 \\
   H$_2$GCN               &   $81.31\pm 0.60$    &    OOM       & $49.09 \pm 0.10$  &  OOM  &  OOM  & OOM   &  10.50 \\ 
   WRGAT               &     $74.32\pm0.53$   &   OOM          &  OOM &  OOM  & OOM  & OOM  &  11.92  \\ 
    GPR-GNN              &   $81.38 \pm 0.16 $   &     $ 78.83 \pm 0.05 $       & $ 45.07 \pm 0.21$ & $ 40.19 \pm 0.03 $ & $ 90.05 \pm 0.31$ & $61.89 \pm 0.29$  &  7.83 \\ 
    GGCN               &    OOM    &     OOM        & OOM &   OOM & OOM & OOM  &   12.25  \\ 
    ACM-GCN               &    $82.52\pm 0.96$    &    $63.81\pm5.20$         & $47.37 \pm 0.59$ &  $55.14\pm0.16$  &  $80.33 \pm 3.91$ &  $62.01\pm 0.73$&   6.83 \\ 
       LINKX                &   $ 84.71 \pm 0.52 $   &     $ 82.04  \pm 0.07 $       & $ \bm{56.00 \pm 1.34}$ & $ 61.95 \pm 0.12$ & $  \underline{90.77 \pm 0.27} $ & $66.06 \pm 0.19$  &   2.50  \\  \hline
     \ada               &   $ \underline{85.57  \pm 0.35}$   &  $\underline{83.00 \pm 0.10}$       & $ \underline{54.68 \pm 0.34}$ & $ \bm{62.09 \pm 0.27}$ & $ 90.66 \pm 0.11$ & $ \underline{66.19  \pm 0.29}$  & 2.17  \\  
      \ada++               &   $  \bm{85.74  \pm 0.42} $   &  $  \bm{83.05\pm 0.07}$       & $ 54.79  \pm 0.25$ & $  \underline{62.03 \pm 0.21} $ & $ \bm{90.91 \pm 0.13}$ & $ \bm{66.34 \pm 0.29}$  &  1.33   \\  \hline

\end{tabular}
}
\end{table*}

\subsection{Algorithms for comparison}
We compare \ada\ and \ada++ with 11 other baselines,
including
(1) MLP;
(2) general GNN methods: 
GCN~\cite{kipf2016semi}, GAT~\cite{velivckovic2017graph},
MixHop~\cite{abu2019mixhop} and GCN\rom{2}~\cite{chen2020simple};
(3) heterophilous-graph-oriented methods:
H$_2$GCN~\cite{zhu2020beyond}, 
WRGAT~\cite{suresh2021breaking},
GPR-GNN~\cite{chien2020adaptive}, GGCN~\cite{yan2021two},
ACM-GCN~\cite{luan2021heterophily} and LINKX~\cite{lim2021large}.
For these methods specially designed for heterophilous graphs,
LINKX is a MLP-based method while others are GNN models.
Further,
although several ACM-variants are proposed in~\cite{luan2021heterophily},
ACM-GCN is reported to achieve the overall best performance on the same splits of benchmark datasets from~\cite{pei2020geom},
so we choose it as the baseline.
For other models like Geom-GCN~\cite{pei2020geom} and FAGCN~\cite{bo2021beyond},
since they have been shown to be outperformed by the state-of-the-arts,
we exclude their results in our paper.

\subsection{Performance results}
Tables~\ref{tab:result_small} and~\ref{tab:result_large} summarize the performance results of all the methods on 15 benchmark datasets.
Note that 
we compare the AUC score on 
genius as in~\cite{lim2021large}.
For other datasets, 
we show the results of classification accuracy.
Each column in the tables corresponds to one dataset.
For each dataset,
we highlight the winner's score in bold and the runner-up's with underline.
From the tables,
we make the following observations:

(1) 
MLP that uses only node features performs surprisingly well on some datasets with large heterophily, such as Actor. 
This
shows the importance of node features for node classification in heterophilous graphs. 

(2)
Compared with the plain GNN models GCN and GAT,
MixHop and GCN\rom{2} generally perform better.
For example,
on Wisconsin, the accuracy scores of MixHop and GCN\rom{2} are $0.7588$ and $0.8039$, respectively,
which significantly outperform that of GCN and GAT. 
On the one hand,
MixHop 
amplifies a node's neighborhood with its multi-hop neighbors.
This introduces more homophilous neighbors for the node.
On the other hand,
the initial residual and identity mapping mechanisms in GCN\rom{2}
implicitly combine intermediate node representations to boost the model performance.


(3) Although H$_2$GCN, WRGAT and GGCN can achieve good performance on small-scale datasets,
they fail to run on very large-scale datasets due to the out-of-memory (OOM) error.
This hinders the wide application of these models.
For ACM-GCN and LINKX,
they cannot consistently provide superior results. 
For example,
LINKX ranks third on large-scale datasets,
but performs poorly on small-scale ones (rank 8th).
ACM-GCN is the winner on Texas, but its accuracy score on pokec is only $0.6381$ (the best result is $0.8305$).
While GPR-GNN leverages both low-pass and high-pass filters,
it only utilizes one type of convolutional filters in each layer,
which restricts its effectiveness.

(4) \ada++ achieves the first average rank over all the datasets while \ada\ is the runner-up. 
This shows that 
both of them can consistently provide superior results on datasets in a wide range of diversity.
On the one hand,
both methods learn to utilize more neighbors with homophily for node neighborhood aggregation.
This
boosts the model performance.
On the other hand,
\ada++
further 
learns the importance of hidden features of nodes,
which improves the classification accuracy.

\subsection{Efficiency study}
In this section,
we study \ada's efficiency.
For fairness,
we compare the training time 
for methods that are specially designed for graphs with heterophily.
In particular,
we make the comparison on the large-scale datasets for better efficiency illustration.
For all these methods,
we use the same training set on each dataset and run the experiments for 500 epochs.
We repeat the experiments three times and show the average training time of these methods
w.r.t. accuracy/AUC scores on the validation set in Figure~\ref{figure:runtime}.
Note that
due to the OOM error,
GGCN fails to run on these datasets and
we exclude it for comparison.
We also drop WRGAT because it takes long time to precompute the multi-relational graph.
For example,
with default hyper-parameter settings,
WRGAT takes around $3200$ seconds to compute the multi-relational graph of Penn94 on a server with 48 CPUs.
However,
the runs of 
all other models that perform on the original graph are finished within the period.
We next recap the major difference of these methods:
all the methods except LINKX are GNN models.
Specifically,
for each node,
ACM-GCN and GPR-GNN perform convolution directly from its adjacent neighbors
while 
H$_2$GCN, \ada\ and \ada++ amplify the neighborhood of the node. 
Here,
H$_2$GCN 
considers multi-hop neighbors in the node's neighborhood while
both \ada\ and \ada++ employ the whole set of nodes in the graph.
Compared with \ada,
\ada++ further incorporates an attention mechanism 
to learn the importance of node hidden features.
Finally,
LINKX is a simple MLP-based model that does not include the graph convolution operation.

\begin{figure}[!htbp]
    \centering
    \subfigure[arXiv-year]{\includegraphics[width=0.2\textwidth]{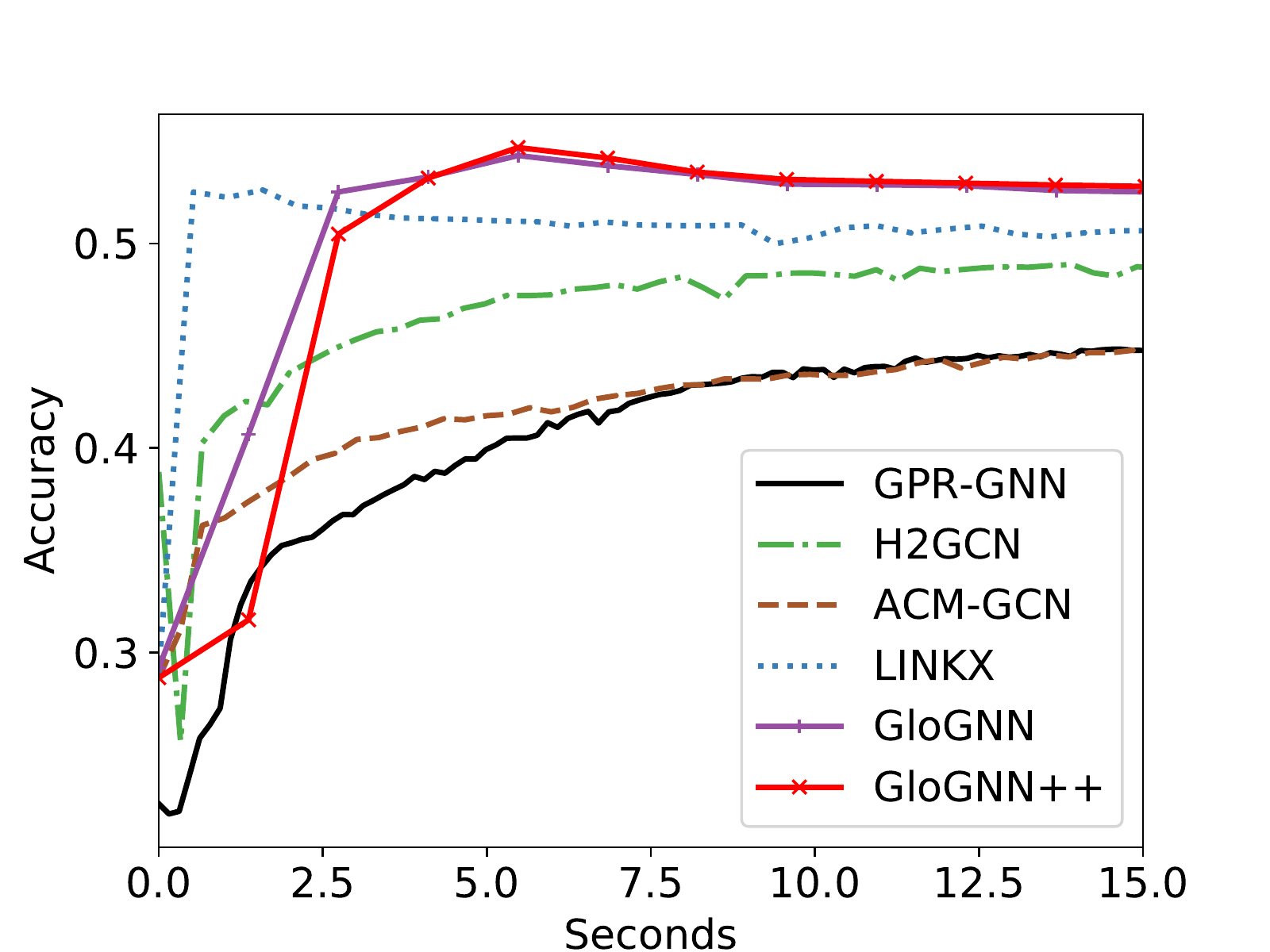}}    
    \subfigure[Penn94]{\includegraphics[width=0.2\textwidth]{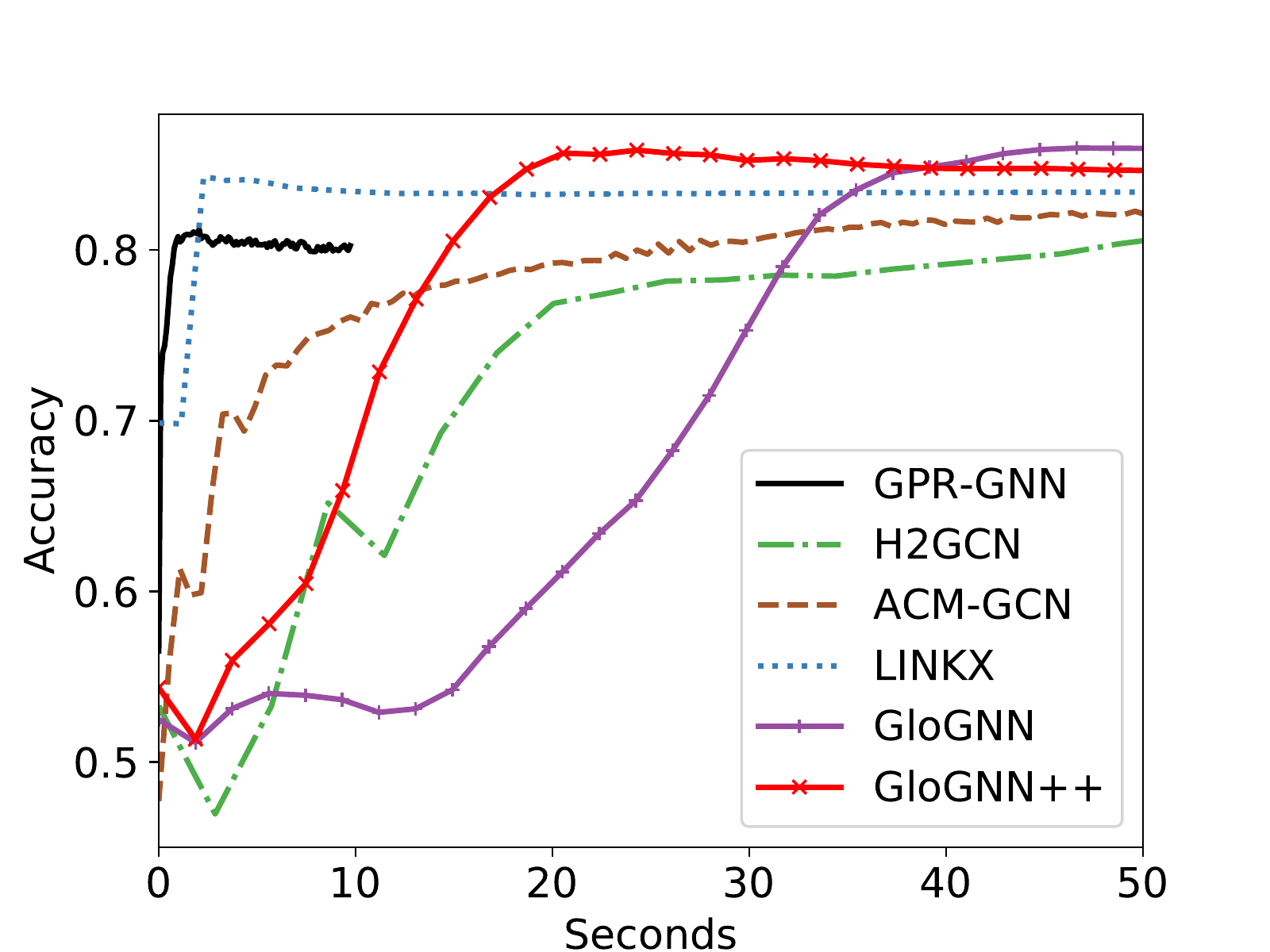}} 
    \subfigure[genius]{\includegraphics[width=0.2\textwidth]{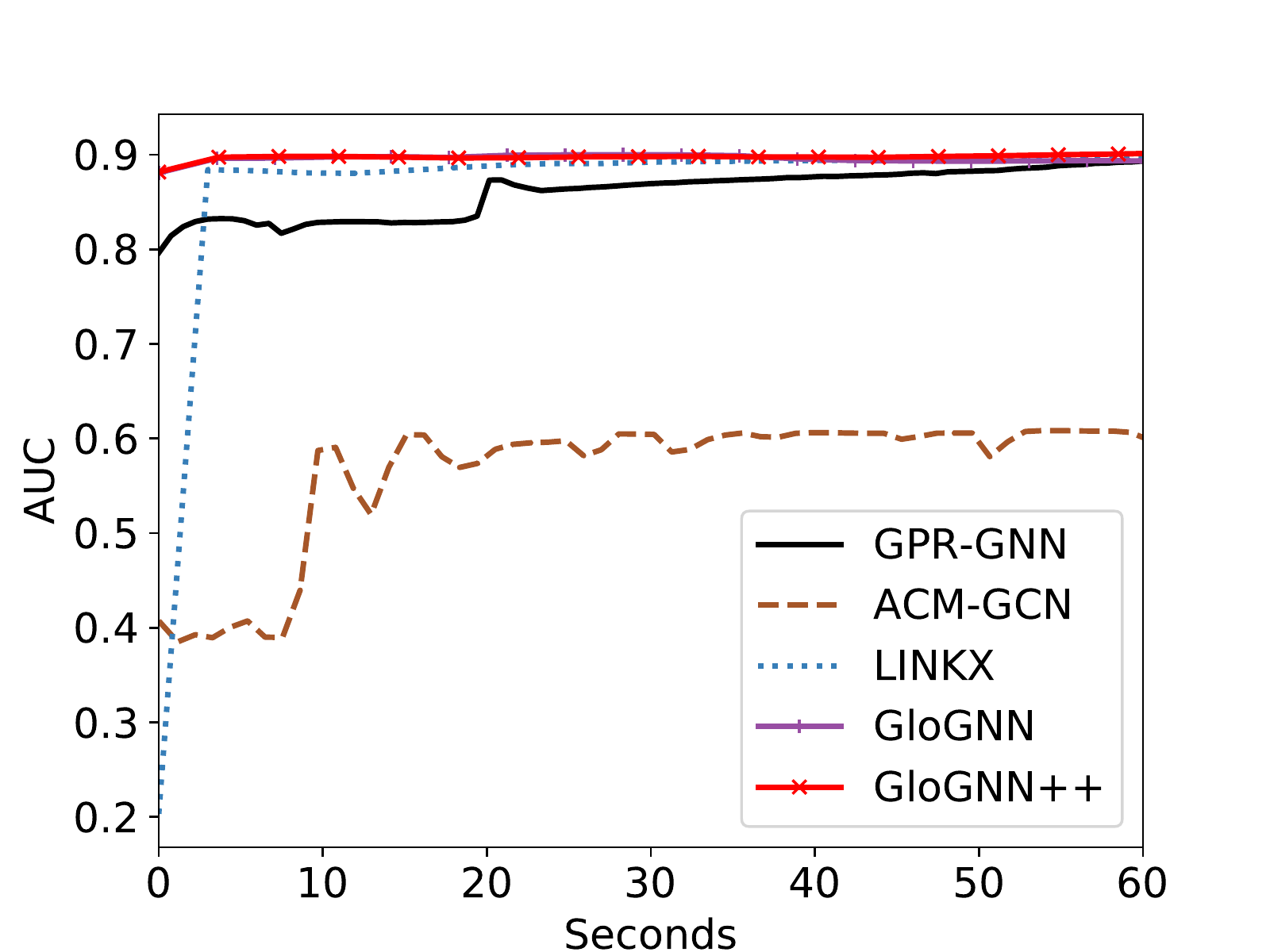}} 
    \subfigure[pokec]{\includegraphics[width=0.2\textwidth]{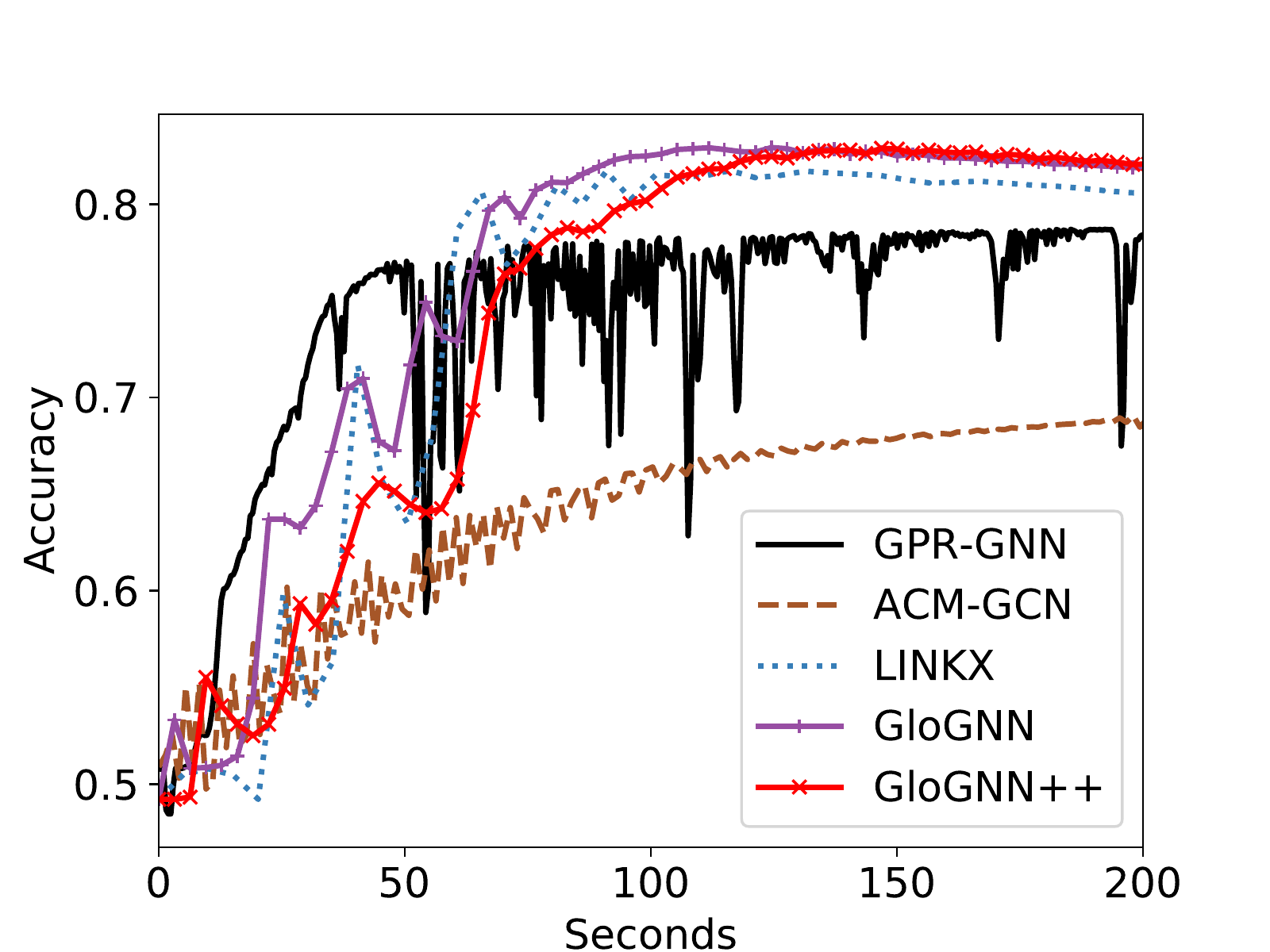}}    
    \subfigure[snap-patents]{\includegraphics[width=0.2\textwidth]{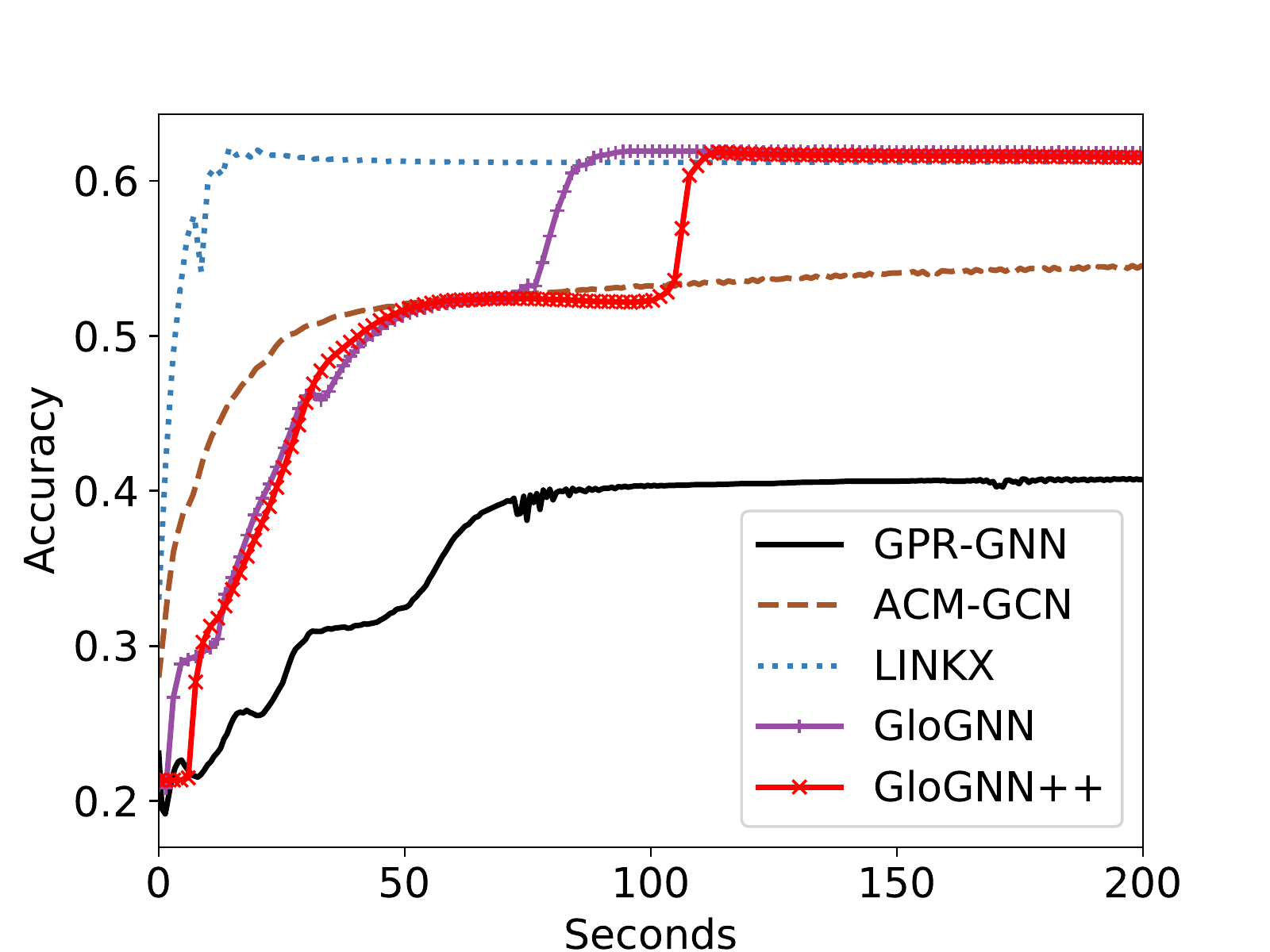}} 
    \subfigure[twitch-gamers]{\includegraphics[width=0.2\textwidth]{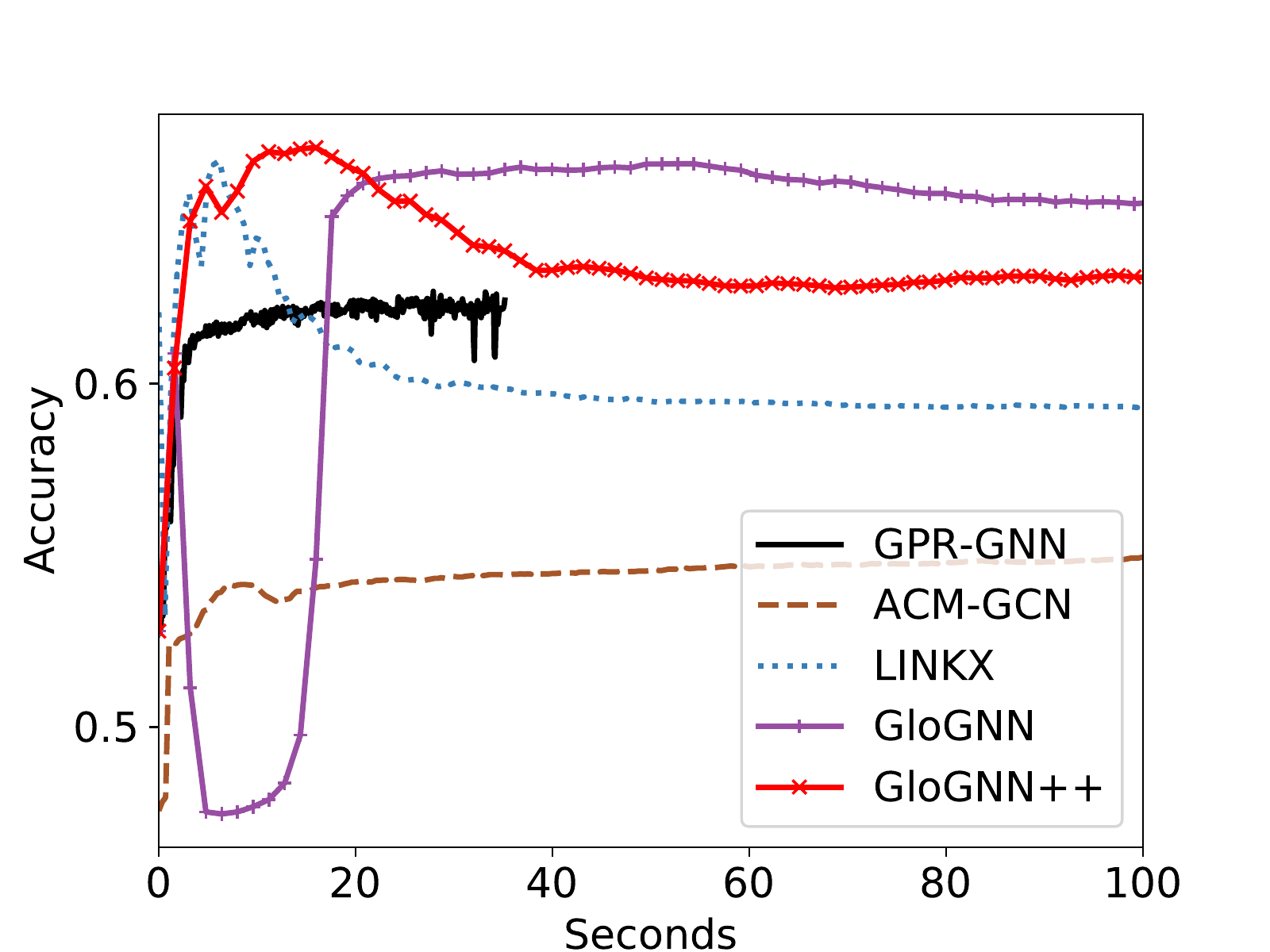}} 
     \caption{Efficiency study: x-axis shows the training time and y-axis is the accuracy/AUC score on the validation set.}
     \label{figure:runtime}
\end{figure}

From Fig.~\ref{figure:runtime}, 
we see that \ada\ and \ada++ converge very fast to the best/runner-up results over all the datasets.
While GPR-GNN runs faster,
it generally performs poorly. 
For LINKX,
the
MLP-based model structure instead of GNN-based explains its scalability.
However,
the 8th-ranked accuracy score on datasets in Table~\ref{tab:result_small}
restricts its wide usage.
For H$_2$GCN and ACM-GCN,
they are slower than \ada\ and \ada++.
For example,
\ada++ achieves almost $8\times$ speedup than ACM-GCN on genius;
it is also $2\times$ faster than H$_2$GCN on Penn94.
These results show that \ada\ and \ada++
are highly effective and also efficient;
hence they can be widely applied to large-scale datasets.


\subsection{Grouping effect}

Lemma~\ref{lemma:z-star} shows that both the coefficient matrix $Z^*$ and the node embedding matrix $H$ have the desired grouping effect.
Considering the dataset size for clear illustration,
we choose Texas, Wisconsin and Cornell as representatives to show the grouping effect of $Z^*$ in Figure~\ref{figure:z} (a)-(c).
All these datasets contain nodes in five labels.
In 
each sub-figure, 
rows and columns are reordered by ground-truth labels.
We use 
red and blue to indicate positive and negative values, respectively.
We further use pixel color brightness to show the positive/negative degree of a value.
The brighter a pixel,
the larger the degree. 
From the figures,
the matrix exhibits the well-defined block diagonal structure.
This shows the grouping effect of $Z^*$.
For Texas and Cornell,
we see only four blocks along the diagonal.
This 
is because in both datasets, 
there exists one object class that includes only one node.
Similarly,
Figure~\ref{figure:z} (d)-(f) further show the grouping effect of 
the output node embedding matrix $H$ on these datasets.
We reorder columns by gold-standard classes.
Each column in the matrix corresponds to a node's embedding vector.
For nodes in the same class,
their embedding vectors are close to each other.
This further explains the superior performance of our models.

\begin{figure}[!htbp]
    \centering
     \subfigure[Texas]{\includegraphics[height=0.15\textwidth]{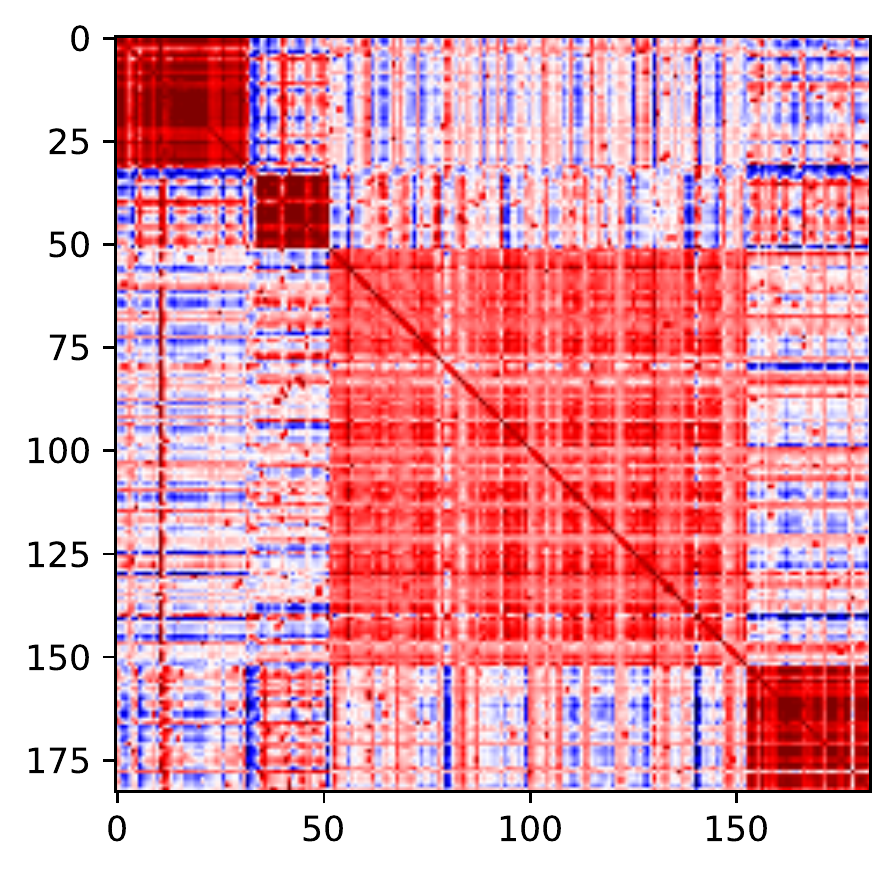}}    
    \subfigure[Wisconsin]{\includegraphics[height=0.15\textwidth]{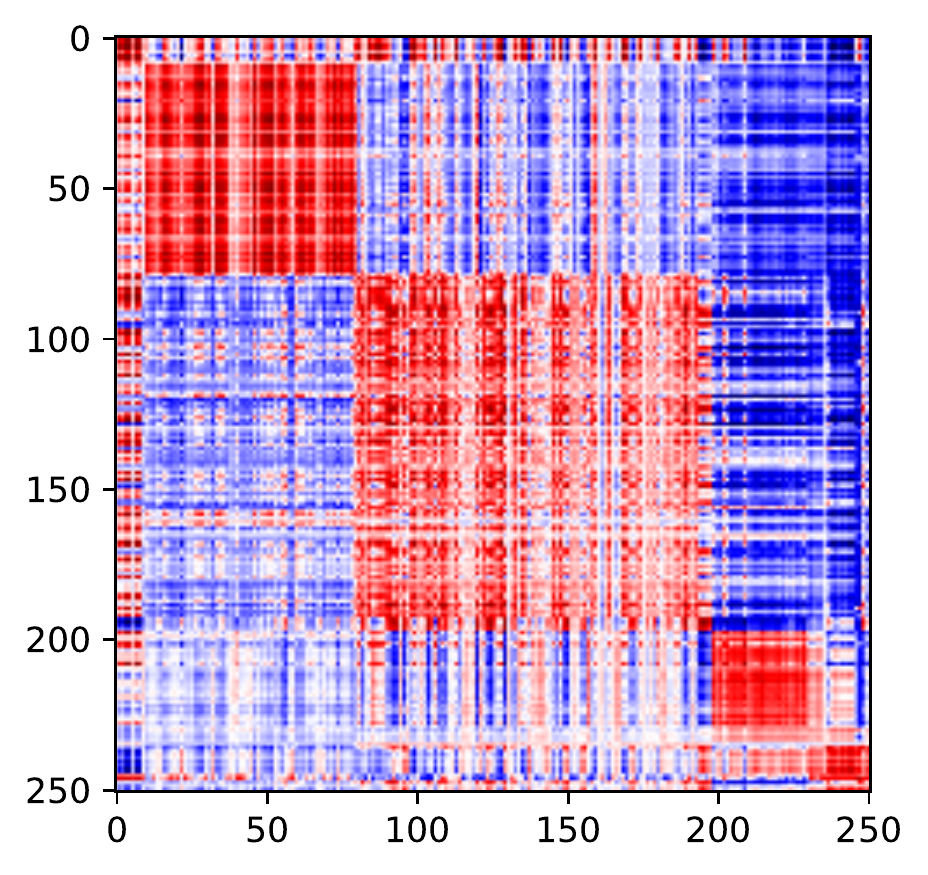}} 
    \subfigure[Cornell]{\includegraphics[height=0.15\textwidth]{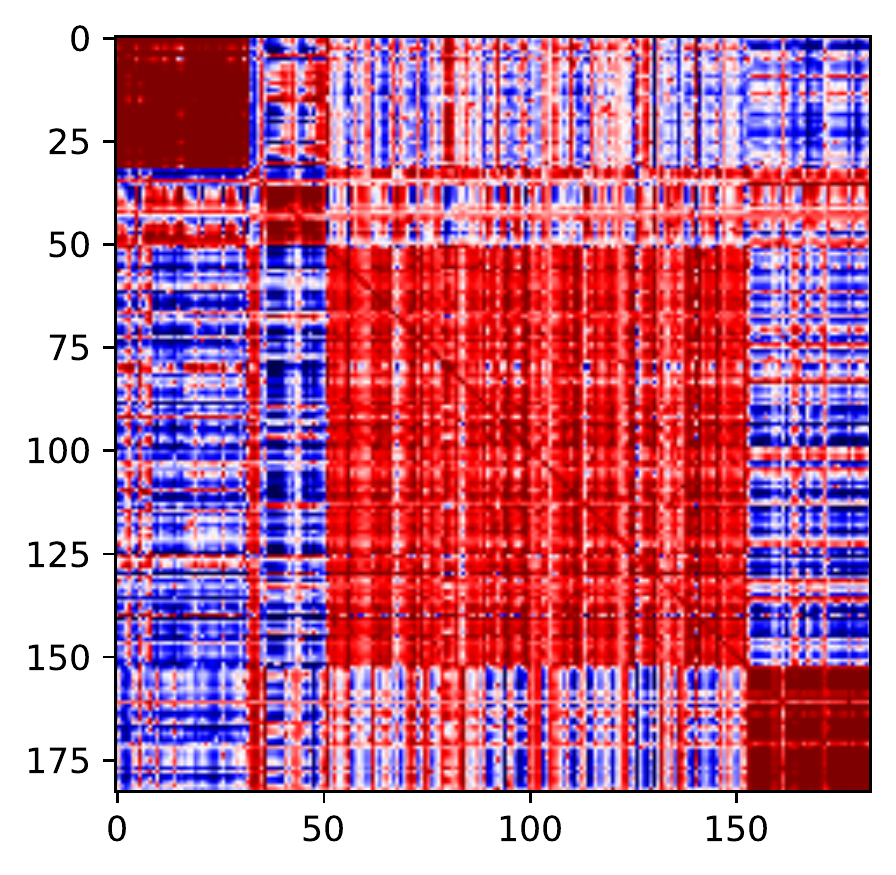}} 
     \subfigure[Texas]{\includegraphics[width=0.5\textwidth]{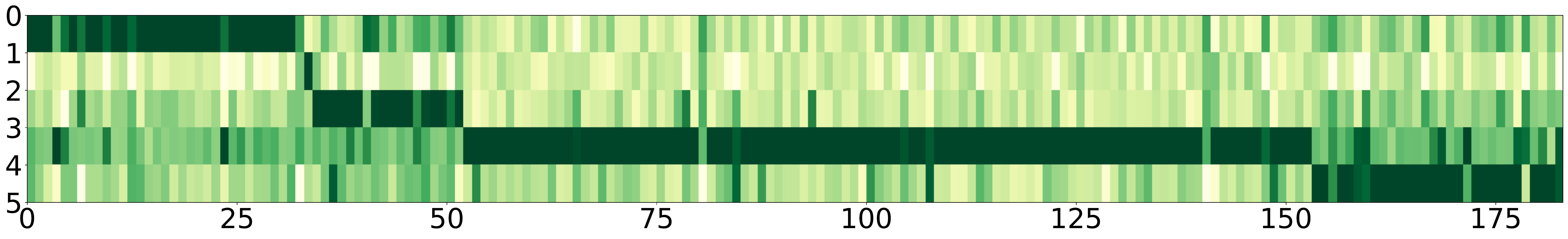}}    
    \subfigure[Wisconsin]{\includegraphics[width=0.5\textwidth]{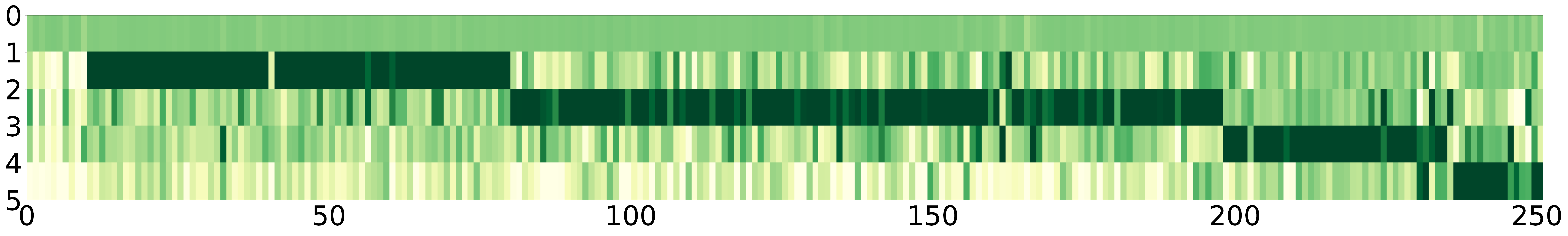}} 
    \subfigure[Cornell]{\includegraphics[width=0.5\textwidth]{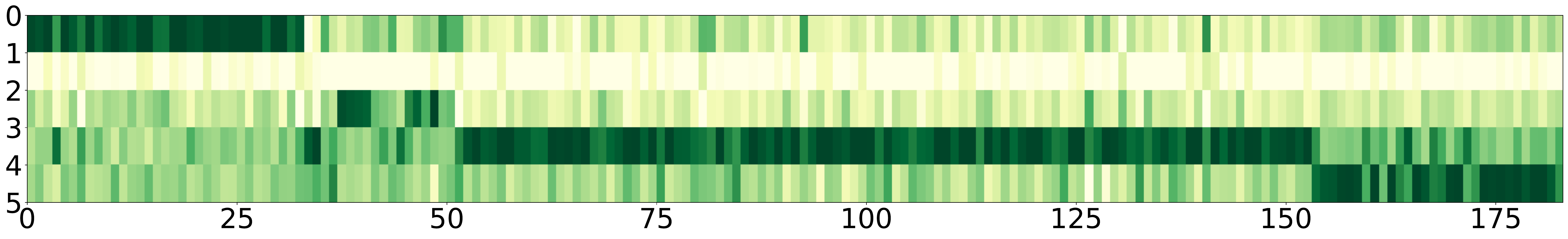}} 
     \caption{The grouping effect of $Z^*$ (a)-(c) and $H$ (d)-(f) on Texas, Wisconsin and Cornell (better view in color).}
     \label{figure:z}
\end{figure}


\subsection{Global homophily}
We end this section with a study to show how \ada\ finds global homophily for nodes in the graph.
Given a graph,
we first calculate the average number of $k$-hop neighbors that share the same label with a node. 
We further inspect the average number of positive $Z^*$ values for these neighbors.
After that,
we compare the results on 6 graphs with large heterophily in Figure~\ref{figure:globalfriends}.
We see that
for each node in these datasets,
the average number of adjacent neighbors in the same class
is less than that of multi-hop
ones (2-hop to 6-hop).
There also exist many $>6$-hop neighbors that can be used to predict a node's label.
This necessitates jumping the locality of a node and finding its global homophily.
Further,
for each node,
our model \ada\ can correctly assign positive values to the global nodes in the same class,
including both adjacent neighbors
and those that are distant.
This also explains the effectiveness of our models.

\begin{figure}[!htbp]
    \centering  
    \subfigure[Texas]{\includegraphics[width=0.15\textwidth]{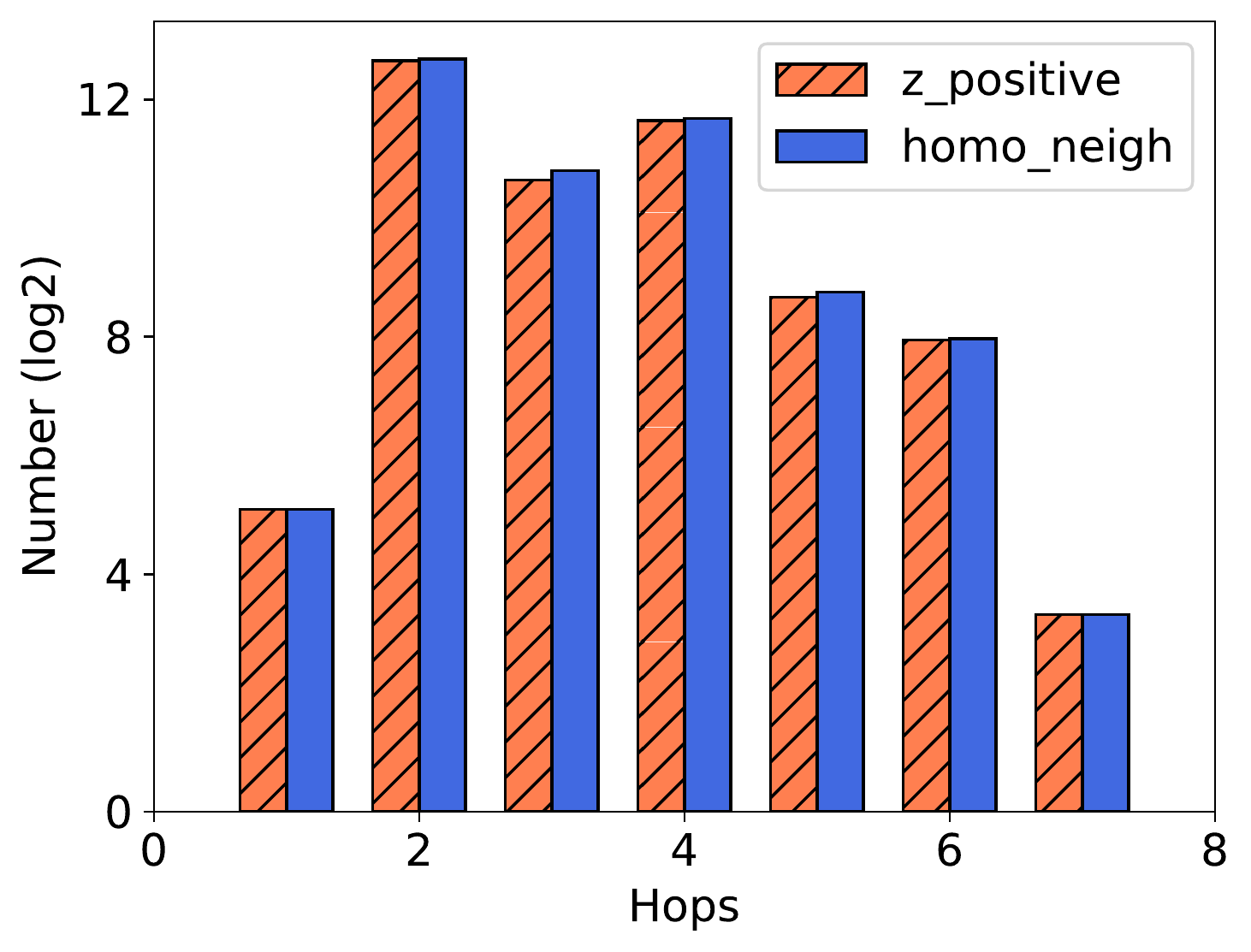}} 
    \subfigure[Wisconsin]{\includegraphics[width=0.15\textwidth]{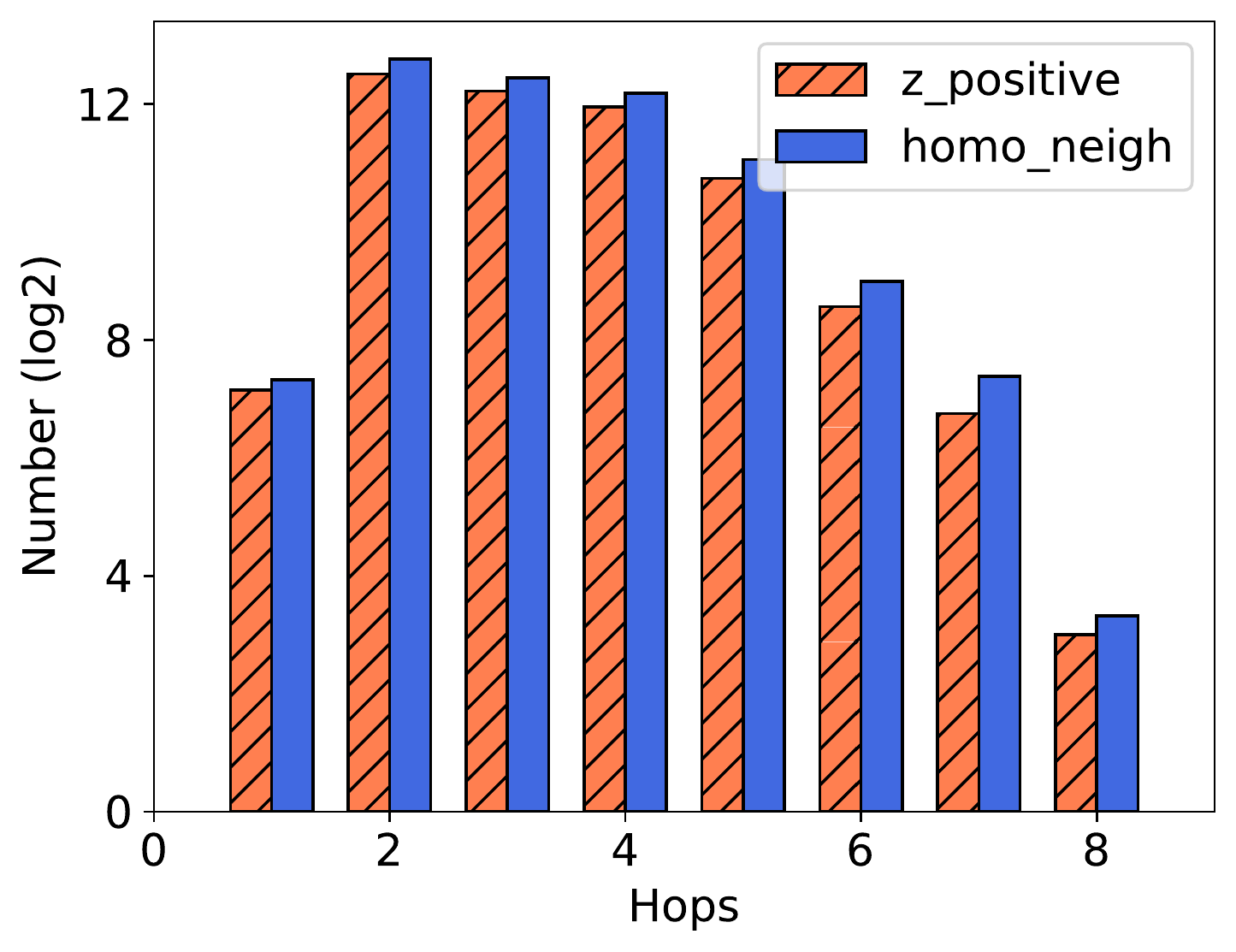}} 
     \subfigure[Cornell]{\includegraphics[width=0.15\textwidth]{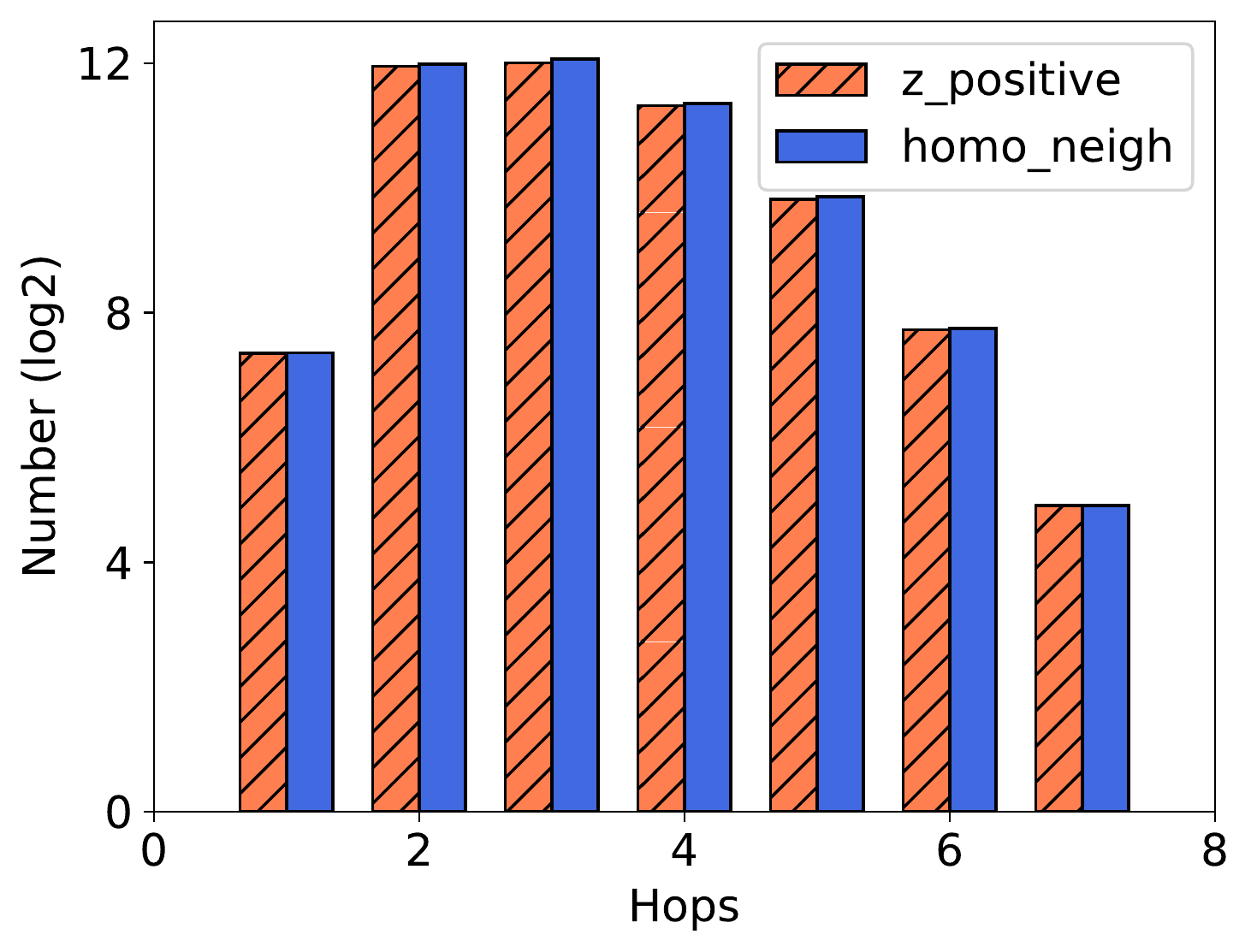}}
     \subfigure[Actor]{\includegraphics[width=0.15\textwidth]{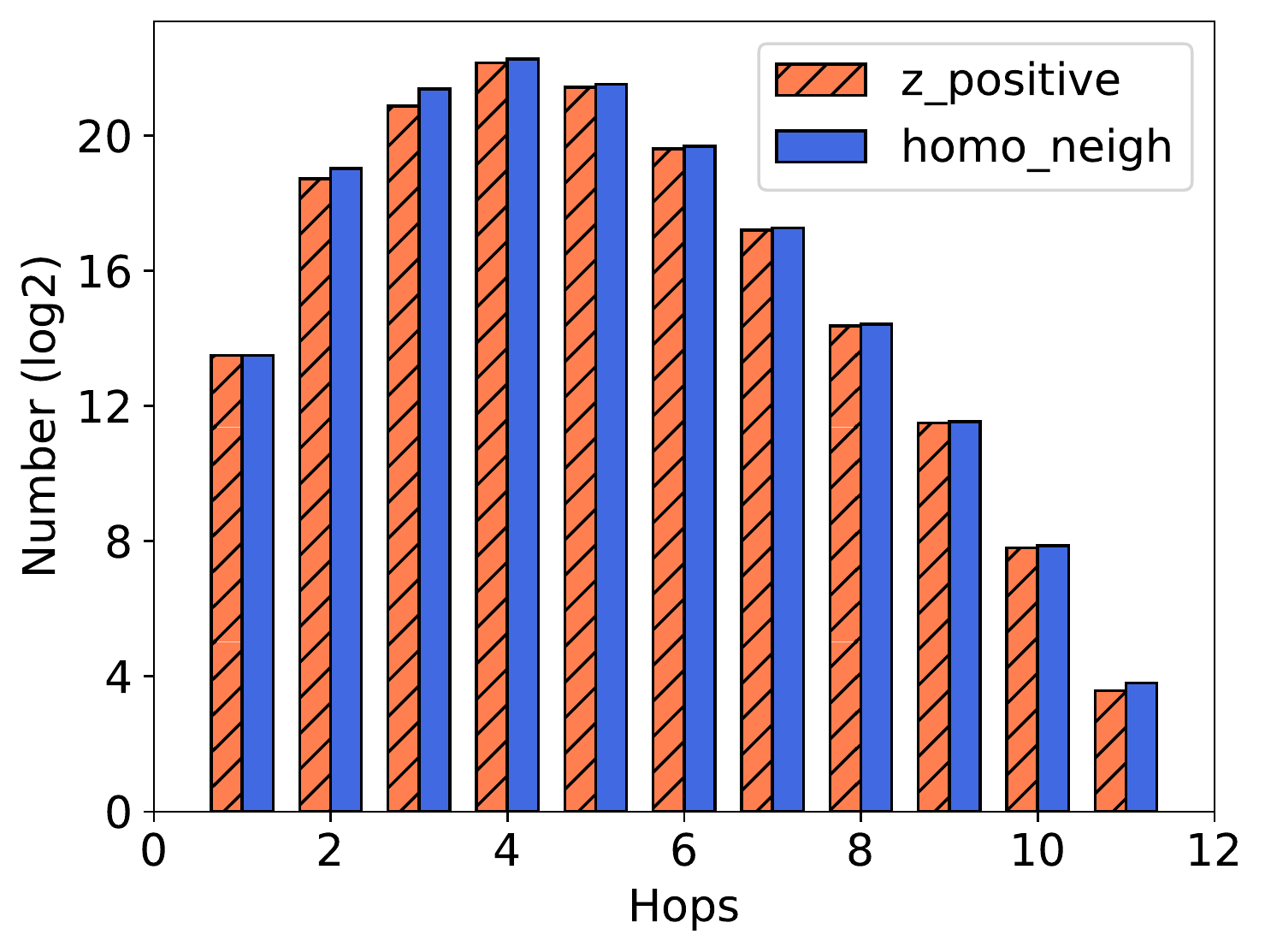}}  
     \subfigure[Squirrel]{\includegraphics[width=0.15\textwidth]{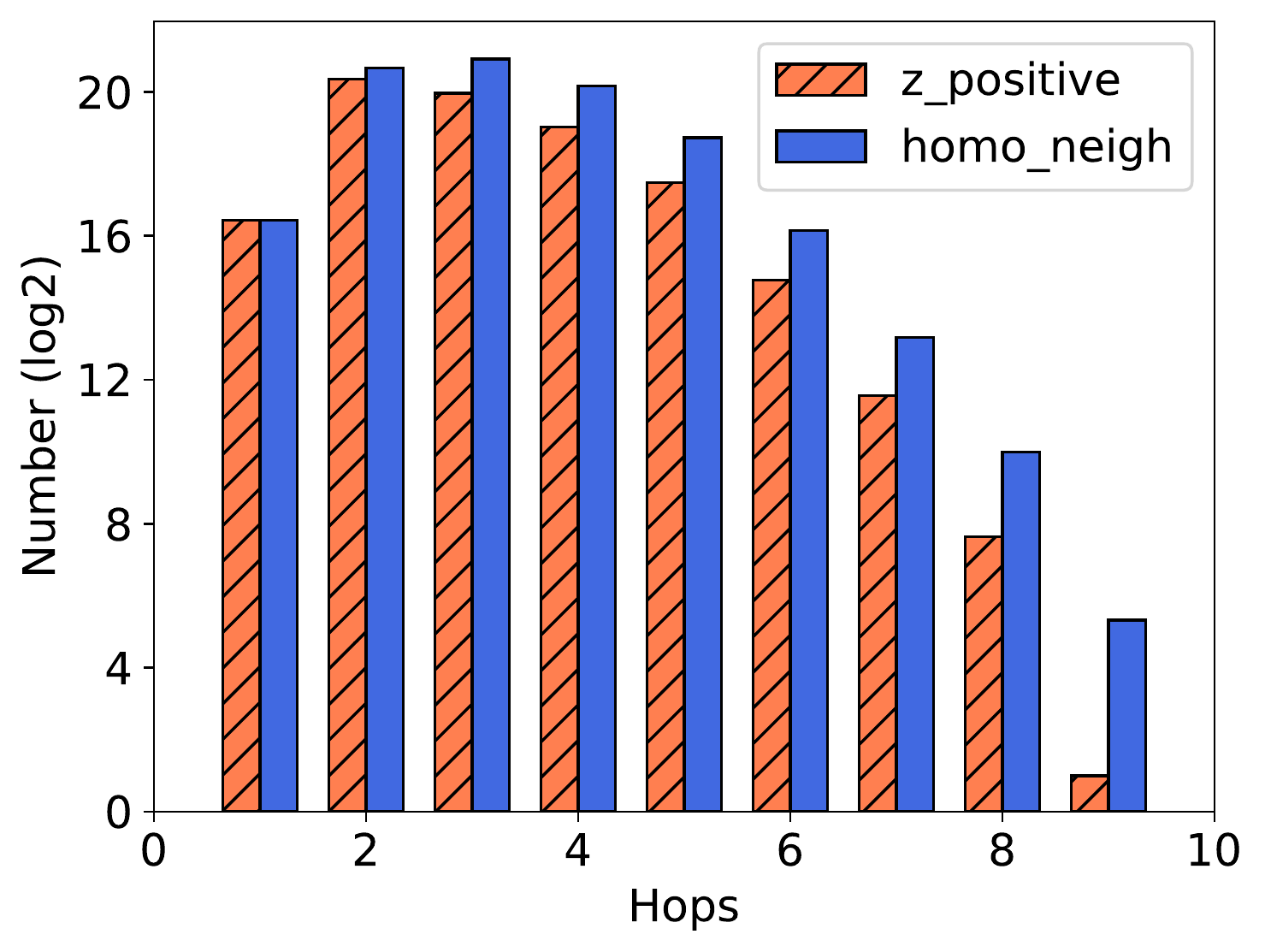}} 
      \subfigure[Chameleon]{\includegraphics[width=0.15\textwidth]{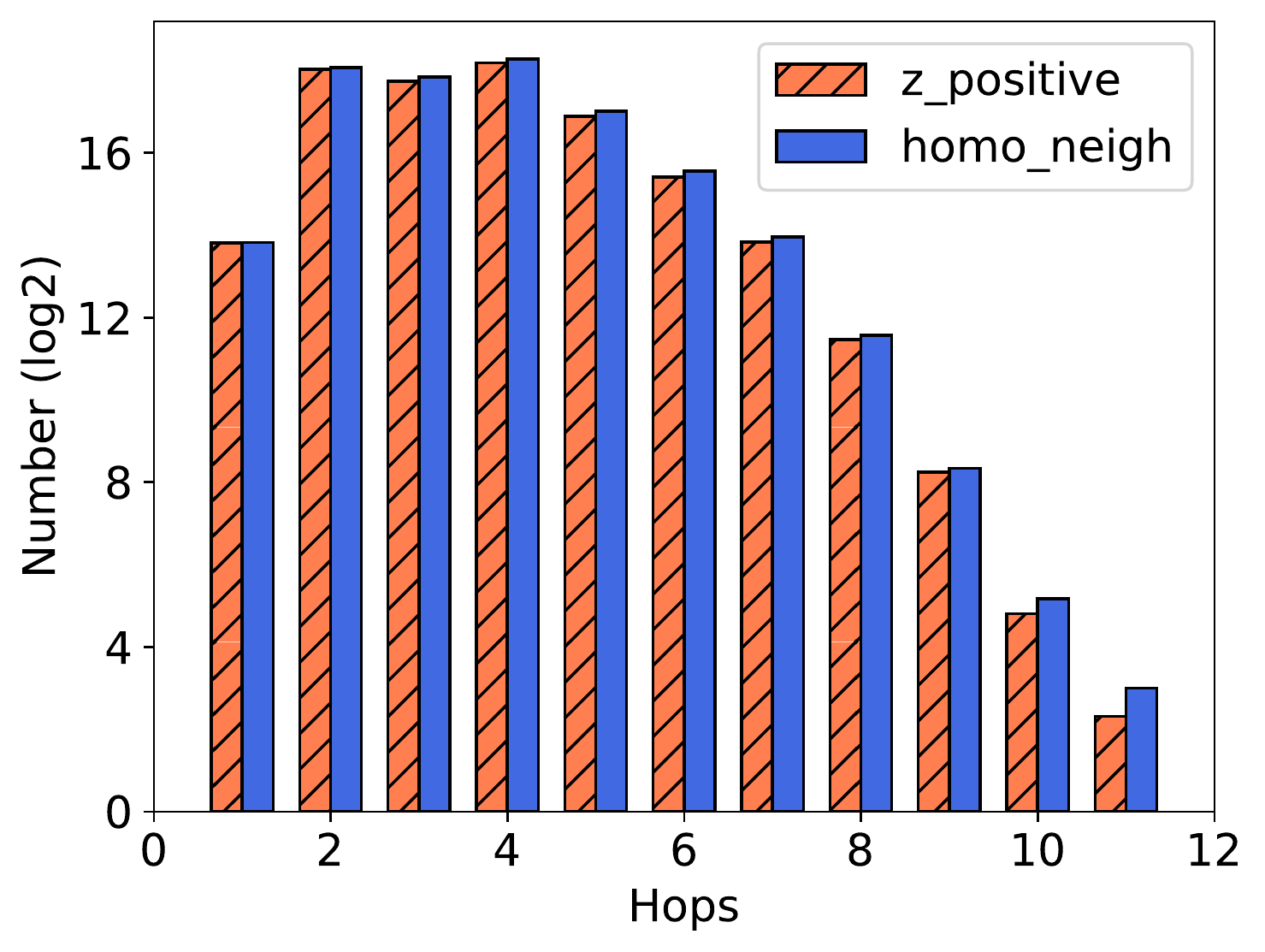}}     
     \caption{Global homophily study}
     \label{figure:globalfriends}
\end{figure}

\section{Conclusions}
\label{sec:conclusion}
In this paper,
we generalized GNNs to graphs with heterophily. 
We 
proposed \ada\ and \ada++,
which 
generate a node's embedding by aggregating information from global nodes in the graph.
In each layer,
we formulated an optimization problem to derive
a coefficient matrix $Z$ that describes the relationships between nodes.
Neighborhood aggregation is then performed based on $Z$.
We accelerated the aggregation process by matrix multiplication reordering without explicitly calculating $Z$.
We mathematically proved that both $Z$ and the generated node embedding matrix $H$ have the desired grouping effect,
which explains the model effectiveness.
We conducted extensive experiments to evaluate the performance of our models.
Experimental results show that our methods performs favorably against other 11 competitors over 15 datasets of diverse heterophilies;
they are also efficient
and converge very fast.

\clearpage 

\bibliography{sample-base}
\bibliographystyle{icml2022}


\appendix
\onecolumn
%

\section{Pseudocodes}
\label{sec:sup}

Given a graph $\mathcal{G} =(\mathcal{V}, \mathcal{E})$ and a label set $\mathcal{C}$ with $|\mathcal{C}| = c$,
let $\mathcal{V} = \mathcal{L} \cup \mathcal{U}$,
where $\mathcal{L}$ is a set of labeled objects and $\mathcal{U}$ is a set of unlabeled
ones, the node classification problem is to learn a mapping $\psi$: $\mathcal{V}  \rightarrow \mathcal{C}$ to 
predict the labels
of nodes in $\mathcal{U}$.
We next summarize the pseudocodes of \ada\ as follows.

\begin{algorithm}
\caption{\ada}
\label{alg}
\begin{algorithmic}[1]
 \STATE {\bfseries Input:} $\mathcal{G} = (\mathcal{V}, \mathcal{E})$, $\mathcal{V} = \mathcal{L} \cup \mathcal{U}$, $A$, $X$, $L$, $\mathcal{C}$, $Y_\mathcal{L}$
 \STATE {\bfseries Output:} the label matrix $Y_\mathcal{U}$ of unlabeled nodes
\STATE Calculate $H_X^{(0)}$ and $H_A^{(0)}$ by Eq.~\ref{eq:h0xa}
\STATE Calculate $H^{(0)}$ by Eq.~\ref{eq:h0}
\FOR {$l \leftarrow 0$ to $L-1$}
\STATE Calculate $Q^{(l+1)}$ by Eq.~\ref{eq:q}
\STATE Calculate $H^{(l+1)}$ by Eq.~\ref{eq:Hexpansion}
\ENDFOR
\STATE Normalize $H^{(L)}$ with the \texttt{Softmax} function and feed the results into the \texttt{Cross-entropy} function
\STATE Optimize the objective function to update weight matrices
\STATE {\bfseries Return:} $Y_\mathcal{U}$
\end{algorithmic}
\end{algorithm}

\section{Datasets}
\label{sec:datasets}
We first use 9 small-scale datasets from~\cite{pei2020geom} 
and divide them into the following four categories:

\noindent{\small$\bullet$}
\textbf{[Citation network]}.
\emph{Cora}, 
\emph{Citeseer} and \emph{Pubmed} are citation graphs,
where each node represents a scientific paper.
These graphs use 
bag-of-words representations as the feature vectors of nodes.
Each node is assigned a label indicating the research field.
Note that these three datasets are homophilous graphs.

\noindent{\small$\bullet$}
\textbf{[WebKB]}.
\emph{Texas}, \emph{Wisconsin} and \emph{Cornell} 
are web page datasets collected 
from computer science departments of various universities.
In these datasets, 
nodes are web pages and 
edges represent hyperlinks between them.
We take bag-of-words representations as nodes' feature vectors.
The task is to classify the web pages into five categories including \emph{student}, \emph{project}, \emph{course}, \emph{staff} and \emph{faculty}.

\noindent{\small$\bullet$}
\textbf{[Actor co-occurrence network]}.
\emph{Actor} is a graph induced from the film-director-actor-writer network in~\cite{tang2009social},
which describes the co-occurrence relation between actors in Wikipedia pages.
Node features are constructed by keywords contained in the Wikipedia pages of actors.
The task is to classify actors into five categories.

\noindent{\small$\bullet$}
\textbf{[Wikipedia network]}.
\emph{Squirrel} and \emph{Chameleon}
are two subgraphs of web pages in Wikipedia.
Our task is to classify nodes into five categories based on their average amounts of monthly traffic.

To further show the effectiveness and efficiency of our models,
we also use 6 large-scale datasets released by~\cite{lim2021large}:

\noindent{\small$\bullet$}
\textbf{[Social network]}.
\emph{Penn94} is a subgraph extracted from Facebook whose nodes are students.
Node features include major, second major/minor, dorm/house, year and high school.
We take students' genders as nodes' labels. 
\emph{Pokec} is a friendship network from a Slovak online social network,
whose nodes are users and edges represent directed friendship relations.
We construct node features from users' profiles, 
such as geographical region, registration time, age.
The task is to classify users based on their genders.
\emph{genius} is a subnetwork extracted from genius.com,
which is a website for crowdsourced annotations of song lyrics.
In the graph,
nodes are users and edges connect users that follow each other.
User features include expertise scores, 
counts of contributions, roles held by users, etc.
Some users are marked with a ``gone'' label on the site, 
which are more likely to be spam users.
Our goal is to predict whether a user is marked with ``gone''.
\emph{twitch-gamers}
is a subgraph from the streaming platform Twitch,
where nodes are users and edges connect mutual followers.
Node features include 
the number of views,
the creation and update dates,
language,
life time 
and whether the account is dead.
The task is to predict whether the channel has explicit content.

\noindent{\small$\bullet$}
\textbf{[Citation network]}.
\emph{arXiv-year} is a directed subgraph of ogbn-arXiv, 
where nodes are arXiv papers and edges represent the citation relations.
We construct node features by 
taking the averaged word2vec embedding vectors of tokens 
contained in both the title and abstract of papers.
The task is to classify these papers into five labels that are constructed based on their posting year.
\emph{snap-patents}
is a US patent network whose nodes are patents and edges are citation relations.
Node features are constructed from patent metadata.
Our goal is to classify the patents into five labels based on the time when they were granted.

\section{Aggregation acceleration}
\label{sec:agg}
To accelerate the updates of $H^{(l+1)}$ in Equation~\ref{eq:conv},
we first follow the Woodbury formula~\cite{max1950inverting} to derive
\begin{small}
\begin{equation}
\begin{split}
\label{eq:inverse}
 & \left [ (1-\gamma)^2 H^{(l)} (H^{(l)})^T +  (\beta_1+\beta_2) I_n\right ]^{-1}  \\
 & =  \frac{1}{\beta_1+\beta_2} I_n - \frac{1}{(\beta_1+\beta_2)^2} H^{(l)} \left[ \frac{1}{(1-\gamma)^2}I_c + \frac{1}{\beta_1+\beta_2} (H^{(l)})^TH^{(l)} \right]^{-1} (H^{(l)})^T \\
\end{split}
\end{equation}
\end{small}
After that,
based on Eq.~\ref{eq:zstar} and Eq.~\ref{eq:inverse},
we can easily transform Eq.~\ref{eq:conv} into Eq.~\ref{eq:Hexpansion}.

\section{Proof}
\label{sec:proof}

In this section, 
we prove Lemma~\ref{lemma2}, Lemma~\ref{lemma3} and Lemma~\ref{lemma:z-star}, respectively.
In the following discussion,
we use 
$z_i^{(l)*}$ to denote the $i$-th row of $Z^{(l)*}$,
which is the coefficient vector for representing node $v_i$;
we denote $\hat{a}_i^k$ as
the $i$-th row of $\hat{A}^k$,
which represents $v_i$'s $k$-hop node reachability in a graph.
We first consider Lemma~\ref{lemma1}:
\begin{lemma}
\label{lemma1}
$\forall 1 \leq i, p \leq n$,
the optimal solution $Z^{(l)*}$ in Eq.~\ref{eq:obj} satisfies
\begin{equation}
\label{eq:zi}
Z_{ip}^{(l)*} = \frac{(1-\gamma)[{h}_i^{(l)}-(1-\gamma)z_i^{(l)*} H^{(l)}-\gamma h_i^{(0)}] (h_p^{(l)})^T + \beta_2 \sum_{k=1}^{K}\lambda_{k}\hat{A}_{{ip}}^k}{\beta_1 + \beta_2}.
\end{equation}
\end{lemma}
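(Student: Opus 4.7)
The plan is to derive Equation~\ref{eq:zi} as the first-order optimality (stationarity) condition of the convex objective in Equation~\ref{eq:obj}. Since every term is quadratic in $Z^{(l)}$ with positive-definite Hessian, the minimizer is uniquely characterized by setting the gradient to zero, so no second-order verification is needed.

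First I would compute the gradient of each of the three terms with respect to $Z^{(l)}$ using standard matrix-calculus identities. For the fitting term $\|H^{(l)}-(1-\gamma)Z^{(l)}H^{(l)}-\gamma H^{(0)}\|_F^2$, the gradient is $-2(1-\gamma)\bigl[H^{(l)}-(1-\gamma)Z^{(l)}H^{(l)}-\gamma H^{(0)}\bigr](H^{(l)})^T$. For $\beta_1\|Z^{(l)}\|_F^2$ it is $2\beta_1 Z^{(l)}$, and for $\beta_2\|Z^{(l)}-\sum_k\lambda_k \hat{A}^k\|_F^2$ it is $2\beta_2\bigl[Z^{(l)}-\sum_{k=1}^K\lambda_k\hat{A}^k\bigr]$. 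Adding these, dividing by $2$, and setting the sum to zero yields
\begin{equation}
\nonumber
(\beta_1+\beta_2)Z^{(l)*} = (1-\gamma)\bigl[H^{(l)}-(1-\gamma)Z^{(l)*}H^{(l)}-\gamma H^{(0)}\bigr](H^{(l)})^T + \beta_2\sum_{k=1}^{K}\lambda_k \hat{A}^k .
\end{equation}

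Next I would isolate $Z^{(l)*}$ by dividing through by $\beta_1+\beta_2>0$, giving a matrix-level identity. Finally I would extract the $(i,p)$ entry: row $i$ of the bracketed factor is ${h}_i^{(l)}-(1-\gamma)z_i^{(l)*}H^{(l)}-\gamma h_i^{(0)}$ (using that left-multiplication by $Z^{(l)*}$ picks out row $i$ via $z_i^{(l)*}$), and multiplying this row by column $p$ of $(H^{(l)})^T$ produces the inner product with $(h_p^{(l)})^T$. The regularization term contributes the entry $\beta_2\sum_{k=1}^K \lambda_k \hat{A}^k_{ip}$. Combining gives exactly Equation~\ref{eq:zi}.

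There is no substantive obstacle here: the result is purely the stationarity condition rewritten entrywise, and the only care needed is the chain rule on the fitting term (keeping track of the factor $(1-\gamma)$ and of the transpose $(H^{(l)})^T$ arising from differentiating $Z^{(l)}H^{(l)}$ in the Frobenius inner product). The identity is implicit because $Z^{(l)*}$ appears on both sides through the residual ${h}_i^{(l)}-(1-\gamma)z_i^{(l)*}H^{(l)}-\gamma h_i^{(0)}$; this is intentional, as Lemmas~\ref{lemma2} and~\ref{lemma3} will exploit precisely this self-referential form to bound $|Z_{ip}^{(l)*}-Z_{jp}^{(l)*}|$ in terms of differences of $h_i^{(l)}$, $h_i^{(0)}$, and $\hat{a}_i^k$.
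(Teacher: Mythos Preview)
Your proposal is correct and follows essentially the same approach as the paper: both derive Equation~\ref{eq:zi} as the first-order stationarity condition of the convex objective in Equation~\ref{eq:obj}. The only cosmetic difference is that you compute the full matrix gradient and then extract the $(i,p)$ entry, whereas the paper decomposes the objective row-wise into $J(z_i^{(l)})$ and differentiates directly with respect to $Z_{ip}^{(l)}$; the resulting computation is identical.
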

\begin{proof}
For $1 \leq i \leq n$,
we define $J(z_i^{(l)}) =  \Vert {h}_i^{(l)}-(1-\gamma) z_i^{(l)} H^{(l)} -\gamma h_i^{(0)}\Vert_2^2 +\beta_1 \Vert z_i^{(l)} \Vert_2^2 +\beta_2 \Vert z_i^{(l)}-\sum_{k=1}^{K}\lambda_{k}\hat{a}_{i}^k\Vert_2^2$.
Since $Z^{(l)*}$ is the optimal solution of Equation~\ref{eq:obj}, 
we have $\frac{\partial{J}}{\partial{Z}_{ip}^{(l)}}\vert_{{z}_i^{(l)} = {z}_i^{(l)*}} = 0,\ \forall 1\leq p \leq n$.
We take the derivative and get 
$-(1-\gamma)[{h}_i^{(l)}-(1-\gamma) z_i^{(l)*} H^{(l)} -\gamma h_i^{(0)}] (h_p^{(l)})^T + \beta_1 Z_{ip}^{(l)*} + \beta_2 (Z_{ip}^{(l)*}-\sum_{k=1}^{K}\lambda_{k}\hat{A}_{{ip}}^k) = 0$,
which induces Eq.~\ref{eq:zi}.
\end{proof}

Based on Lemma~\ref{lemma1}, 
we first prove Lemma~\ref{lemma2}:
\begin{proof}
From Equation~\ref{eq:zi}, we get 
\begin{equation}
\begin{split}
Z_{ip}^{(l)*}-Z_{jp}^{(l)*} & = \frac{1-\gamma}{\beta_1+\beta_2}({h}_i^{(l)} - {h}_j^{(l)})(h_p^{(l)})^T - \frac{(1-\gamma)^2}{\beta_1+\beta_2} (z_i^{(l)*} - z_j^{(l)*}) H^{(l)}(h_p^{(l)})^T  \\
 &-\frac{\gamma(1-\gamma)}{\beta_1+\beta_2} (h_i^{(0)} - h_j^{(0)})(h_p^{(l)})^T + \frac{\beta_2}{\beta_1+\beta_2} \sum_{k=1}^{K}\lambda_{k}(\hat{A}_{{ip}}^k - \hat{A}_{{jp}}^k) \\
 \end{split}
\end{equation}
Since 
\begin{equation}
\begin{split}
z_i^{(l)*}   = 
& \left [(1-\gamma)h_i^{(l)}(H^{(l)})^T +\beta_2 \sum_{k=1}^{K}\lambda_{k}\hat{a}_i^k - \gamma(1-\gamma) h_i^{(0)}(H^{(l)})^T\right ]  \cdot\\
& \left [(1-\gamma)^2H^{(l)}(H^{(l)})^T + (\beta_1 + \beta_2) I_n\right ]^{-1}  \\
\end{split}
\end{equation}
and
\begin{equation}
\begin{split}
z_j^{(l)*}   = 
& \left [(1-\gamma)h_j^{(l)}(H^{(l)})^T +\beta_2 \sum_{k=1}^{K}\lambda_{k}\hat{a}_j^k - \gamma(1-\gamma) h_j^{(0)}(H^{(l)})^T\right ]  \cdot\\
& \left [(1-\gamma)^2H^{(l)}(H^{(l)})^T + (\beta_1 + \beta_2) I_n\right ]^{-1}  \\
\end{split}
\end{equation}
let $R = \left [(1-\gamma)^2H^{(l)}(H^{(l)})^T + (\beta_1 + \beta_2) I_n\right ]^{-1} H^{(l)} (h_p^{(l)})^T $ and
we derive
\begin{equation}
\begin{split}
Z_{ip}^{(l)*}-Z_{jp}^{(l)*} 
 & = \frac{1-\gamma}{\beta_1 + \beta_2}({h}_i^{(l)} - {h}_j^{(l)})(h_p^{(l)})^T - \frac{(1-\gamma)^3}{\beta_1 + \beta_2}(h_i^{(l)}-h_j^{(l)})(H^{(l)})^T R   \\
 & - \frac{\beta_2(1-\gamma)^2}{\beta_1+\beta_2} \sum_{k=1}^{K}\lambda_{k}(\hat{a}_i^k - \hat{a}_j^k) R + \frac{\gamma(1-\gamma)^3}{\beta_1+\beta_2}(h_i^{(0)}- h_j^{(0)})(H^{(l)})^T R \\
 & -\frac{\gamma(1-\gamma)}{\beta_1 + \beta_2} (h_i^{(0)} - h_j^{(0)})(h_p^{(l)})^T +   \frac{\beta_2}{\beta_1+\beta_2}  \sum_{k=1}^{K}\lambda_{k}(\hat{A}_{{ip}}^k - \hat{A}_{{jp}}^k) \\
 \end{split}
\end{equation}
We further have 
\begin{equation}
\label{eq:ip_app}
\begin{split}
|Z_{ip}^{(l)*}-Z_{jp}^{(l)*}| & \leq \frac{1-\gamma}{\beta_1+\beta_2}\Vert {h}_i^{(l)} - {h}_j^{(l)} \Vert_2 \Vert (h_p^{(l)})^T - (1-\gamma)^2(H^{(l)})^T R \Vert_2   \\
 & + \frac{\gamma(1-\gamma)}{\beta_1 + \beta_2} \Vert h_i^{(0)} - h_j^{(0)}\Vert_2 \Vert (h_p^{(l)})^T -  (1-\gamma)^2(H^{(l)})^T R   \Vert_2 \\
 & + \frac{\beta_2(1-\gamma)^2}{\beta_1 + \beta_2} \sum_{k=1}^{K}\lambda_{k}\Vert \hat{a}_i^k - \hat{a}_j^k\Vert_2 \Vert R \Vert_2 + \frac{\beta_2}{\beta_1+\beta_2}  \sum_{k=1}^{K}\lambda_{k}|\hat{A}_{{ip}}^k - \hat{A}_{{jp}}^k| \\
\end{split}
\end{equation}
\end{proof}

We next prove Lemma~\ref{lemma3}:

\begin{proof}
From Equation~\ref{eq:zi}, we get 
\begin{equation}
\begin{split}
Z_{pi}^{(l)*}-Z_{pj}^{(l)*} & = \frac{(1-\gamma)[{h}_p^{(l)} -(1-\gamma) z_p^{(l)*} H^{(l)} -\gamma h_p^{(0)} ](h_i^{(l)} - h_j^{(l)})^T}{\beta_1 + \beta_2} \\
 &+  \frac{\beta_2}{\beta_1+\beta_2}  \sum_{k=1}^{K}\lambda_{k}(\hat{A}_{{pi}}^k - \hat{A}_{{pj}}^k) \\
\end{split}
\end{equation}
That implies 
\begin{equation}
\label{eq:zizj}
\begin{split}
|Z_{pi}^{(l)*}-Z_{pj}^{(l)*}|
& \leq \frac{(1-\gamma) \lVert {h}_p^{(l)} -(1-\gamma) z_p^{(l)*} H^{(l)} -\gamma h_p^{(0)} \rVert_2 \lVert h_i^{(l)} - h_j^{(l)}\rVert_2}{\beta_1 + \beta_2}  \\
 &+  \frac{\beta_2}{\beta_1+\beta_2} \sum_{k=1}^{K}\lambda_{k}|\hat{A}_{{pi}}^k - \hat{A}_{{pj}}^k| \\
\end{split}
\end{equation}
Since $Z^{(l)*}$ is the optimal solution to Equation~\ref{eq:obj}, we have
\begin{equation}
\begin{split}
& J(z_p^{(l)*})  =  \lVert {h}_p^{(l)}-(1-\gamma)z_p^{(l)*} H^{(l)}-\gamma h_p^{(0)}\rVert_2^2 + \beta_1\Vert z_p^{(l)*} \Vert_2^2 \\
& + \beta_2 \lVert z_p^{(l)*}-\sum_{k=1}^{K}\lambda_{k}\hat{a}_{p}^k\rVert_2^2 \leq  
J(0)  = \lVert{h}_p^{(l)}-\gamma h_p^{(0)}\rVert_2^2 + \beta_2 \lVert\sum_{k=1}^{K}\lambda_{k}\hat{a}_{p}^k\rVert_2^2 .\\
\end{split}
\end{equation}
Hence, 
\begin{equation}
\lVert {h}_p^{(l)}-(1-\gamma)z_p^{(l)*}H^{(l)}-\gamma h_p^{(0)}\rVert_2 \leq \sqrt{\lVert{h}_p^{(l)}-\gamma h_p^{(0)}\rVert_2^2 + \beta_2 \lVert\sum_{k=1}^{K}\lambda_{k}\hat{a}_{p}^k\rVert_2^2} = \eta.
\end{equation}
Equation~\ref{eq:zizj} can be further simplified as
\begin{equation}
\label{eq:pi_app}
|Z_{pi}^{(l)*}-Z_{pj}^{(l)*}| \leq  \frac{\eta (1-\gamma) \lVert h_i^{(l)} - h_j^{(l)}\rVert_2 
 +   \beta_2 \sum_{k=1}^{K}\lambda_{k}|\hat{A}_{{pi}}^k - \hat{A}_{{pj}}^k|}{\beta_1 + \beta_2} 
\end{equation}
\end{proof}

The proof of Lemma~\ref{lemma:z-star} is given as follows:
\begin{proof}
Given two nodes $v_i$ and $v_j$, if $v_i \rightarrow v_j$,
we can get by definition
(1) $\Vert x_i - x_j\Vert_2 \rightarrow 0$
and (2) $\lVert \hat{a}_i^k - \hat{a}_j^k\rVert_2 \rightarrow 0,\ \forall k \in [1,K]$.
Then based on Equations~\ref{eq:h0xa} and~\ref{eq:h0},
we can easily get $\Vert h_i^{(0)} - h_j^{(0)}\Vert_2 \rightarrow 0$.
Hence $H^{(0)} $ has grouping effect.
We next show that $Z^{(0)*}$ has grouping effect.
Since $\lVert \hat{a}_i^k - \hat{a}_j^k\rVert_2 \rightarrow 0$, 
then $|\hat{A}_{{ip}}^k - \hat{A}_{{jp}}^k| \rightarrow 0$ and
$|\hat{A}_{{pi}}^k - \hat{A}_{{pj}}^k| \rightarrow 0$ (due to the symmetry of $\hat{A}^k$).
According to Equation~\ref{eq:ip_app},
the R.H.S. of the equation will become close to 0, 
which induces that $|Z_{ip}^{(0)*}-Z_{jp}^{(0)*}|\rightarrow 0$
and
$Z^{(0)*}$ thus has grouping effect.
Similarly,
the R.H.S. of Equation~\ref{eq:pi_app} also approaches 0,
leading to the grouping effect of $(Z^{(0)*})^T$.
Then we show $H^{(1)}$ has grouping effect.
From Eq.~\ref{eq:conv},
$H^{(l+1)}$ is updated based on $H^{(0)}$ and $Z^{(l)*}H^{(l)}$.
Due to the grouping effect of $Z^{(0)*}$ and $H^{(0)}$,
the linear representation $Z^{(0)*}H^{(0)}$ also has grouping effect,
which further induces that
$H^{(1)}$ has grouping effect.
In this way,
we can inductively prove that $Z^{(l)*}$, $(Z^{(l)*})^T$ and $H^{(l+1)}$ all have grouping effect.

\end{proof}

\section{Ablation study}
\label{sec:ab}
We next conduct an ablation study
to understand the main components of \ada.
To construct the initial node embedding matrix $H^{(0)}$,
\ada\ first
transforms both feature matrix and adjacency matrix into low-dimensional embedding vectors, respectively.
To show the importance of feature matrix in constructing $H^{(0)}$,
we set $\alpha = 0$ in Equation~\ref{eq:h0} and derive $H^{(0)} = H_X^{(0)}$.
We call this variant \textbf{\ada-na} (no adjacency matrix).
Similarly,
to understand the importance of feature matrix,
we set $\alpha = 1$ and call the variant \textbf{\ada-nf} (no feature matrix).
Further,
to utilize the local structural information of a node,
\ada\ regularizes the coefficient matrix $Z$ with multi-hop graph adjacency matrices, as shown in Equation~\ref{eq:obj}.
We thus consider a variant \textbf{\ada-nl} (no local regularization)
by removing the regularization term to study
the importance of local graph structures of nodes.
Finally,
we compare \ada\ with these variants on all the benchmark datasets and show the results in Fig.~\ref{fig:ab}.
From the figure,
we see 

(1)
While \ada-na and \ada-nf can achieve comparable performance with \ada\ on some datasets,
\ada\ significantly outperforms them on others. 
This shows the necessity of 
\ada\ 
to adaptively learn the importance of feature matrix and adjacency matrix 
when constructing the initial node embedding vectors on various datasets.

(2)
\ada\ generally performs better than \ada-nl.
Since \ada-nl ignores the local regularization term,
it could fail to identify two homophilous nodes that share similar local graph structures.
On the other hand,
\ada\ measures node similarity in terms of both node features and local graph structures,
which further explains \ada's robustness towards graphs with various heterophilies.


\begin{figure*}[!htbp]
    \centering
        \includegraphics[width = \linewidth]{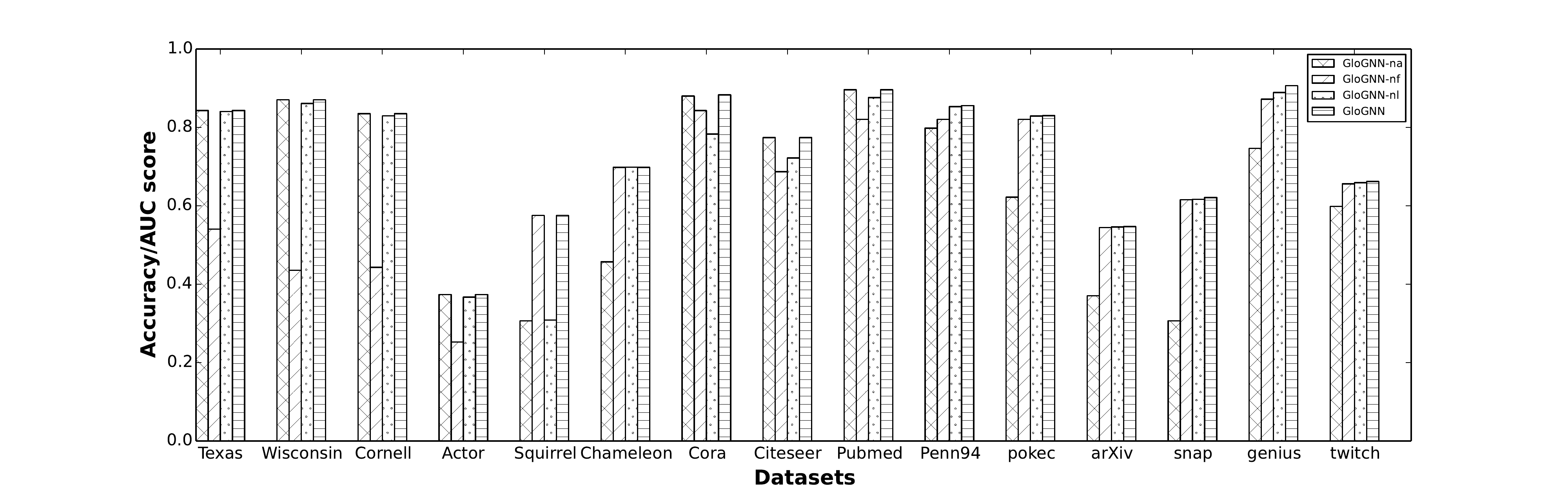}
        \caption{Ablation study}
        \label{fig:ab}
\end{figure*}

\section{Experimental setup}
\label{sec:setup}
We implemented \ada\ by PyTorch.
For fairness,
we run the experiments of 9 small-scale datasets on CPUs 
and optimize the models by Adam as in~\cite{yan2021two}.
Meanwhile,
we run the experiments of 6 large-scale datasets on a single Tesla V100 GPU with 32G memory
and use AdamW as the optimizer following~\cite{lim2021large}.
We perform a grid search to tune hyper-parameters based on the results on the validation set.
Details of these hyper-parameters are listed in Tables~\ref{table:grid_search_small} and~\ref{table:grid_search_large}.
Further,
since the results of most baseline methods on these benchmark datasets are public,
we directly report these results.
For those cases where the results are absent,
we use the original codes released by their authors and fine tune the model parameters as suggested in~\cite{yan2021two,lim2021large,suresh2021breaking}.
We provide our code and data at \url{https://github.com/RecklessRonan/GloGNN}.

\begin{table}[!htbp]
\caption{Grid search space on small-scale datasets}
\centering
\begin{tabular}{|c|c|}
    \hline
    {\bf Notation} & {\bf Range} \\ \hline
     lr  & $\{0.01, 0.005\}$ \\ \hline 
      dropout & $[0, 0.9]$ \\ \hline 
       early\_stopping  & $\{40, 200, 300\}$ \\ \hline 
        weight\_decay  & $\{1\mathrm{e}{-5}, 5\mathrm{e}{-5}, 1\mathrm{e}{-4}\}$ \\ \hline 
         $\alpha$  & $[0,1]$ \\ \hline 
         $\beta_1$  & $\{0, 1, 10\}$ \\ \hline 
         $ \beta_2$  & $\{0.1, 1, 10, 10^2, 10^3\}$ \\ \hline  
           $\gamma$ & $[0, 0.9]$ \\ \hline 
            norm\_layers  & $\{1, 2, 3\}$ \\ \hline 
             max\_hop\_count $K$  & $[1, 6]$ \\ \hline 
\end{tabular}
\label{table:grid_search_small}
\end{table}

\begin{table}[!htbp]
\caption{Grid search space on large-scale datasets}
\centering
\begin{tabular}{|c|c|}
    \hline
    {\bf Notation} & {\bf Range} \\ \hline
     lr  & $\{0.01, 0.005, 0.001\}$ \\ \hline 
      dropout & $[0, 0.9]$ \\ \hline 
        weight\_decay  & $\{0, 1\mathrm{e}{-3}, 1\mathrm{e}{-2}, 1\mathrm{e}{-1}\}$ \\ \hline 
         $\alpha$  & $\{0.1, 0.5, 0.9\}$ \\ \hline 
         $\beta_1$  & $\{0, 0.1, 1\}$ \\ \hline 
         $ \beta_2$  & $\{0.1, 1\}$ \\ \hline  
           $\gamma$ & $\{0.1, 0.5, 0.9\}$ \\ \hline 
            norm\_layers  & $\{1, 2, 3\}$ \\ \hline 
             max\_hop\_count $K$  & $\{1, 2, 3\}$ \\ \hline   
\end{tabular}
\label{table:grid_search_large}
\end{table}

\end{document}